\documentclass{article}

\PassOptionsToPackage{numbers, compress}{natbib}

\usepackage[preprint]{neurips_2025}

 \usepackage{amsmath}
\usepackage[utf8]{inputenc} 
\usepackage[T1]{fontenc}    
\usepackage{hyperref}       
\usepackage{url}            
\usepackage{booktabs}       
\usepackage{amsfonts}       
\usepackage{nicefrac}       
\usepackage{microtype}      
\usepackage{xcolor}         
\usepackage{xpatch}
\usepackage{amsthm} 
\newtheorem{theorem}{Theorem} 
\newtheorem{lemma}{Lemma}   
\newtheorem{proposition}{Proposition} 
\usepackage{xpatch}
\usepackage{amsthm}
\usepackage{mdframed} 
\usepackage{graphicx}
\usepackage{algorithm}
\usepackage{algpseudocode}
\usepackage{mathtools}
\usepackage{wrapfig}

\usepackage{tikz} 
\usepackage[utf8]{inputenc} 
\usepackage[T1]{fontenc}    
\usepackage{url}            
\usepackage{microtype}      
\usepackage{algorithm}
\usepackage{algpseudocode}
\usepackage{listings}
\usepackage{overpic}
\usepackage{wrapfig}
\usepackage{colortbl}
\usepackage{tabularx}
\usepackage{verbatim}
\usepackage{nicefrac}
\usepackage{empheq}         

\usepackage{booktabs}   
\usepackage{caption} 
\usepackage{subcaption}
\usepackage{multirow}       
\usepackage{makecell}       
\usepackage{adjustbox}      
\usepackage{graphicx}    
\usepackage{array}

\usepackage{tikz}
\usepackage{amsmath} 

\usepackage{centernot}
\usepackage{xcolor}
\definecolor{citecolor}{HTML}{0071BC}
\definecolor{linkcolor}{HTML}{ED1C24}

\usetikzlibrary{arrows.meta, positioning, calc}

\definecolor{myorange}{RGB}{255,127,0} 
\definecolor{myblue}{RGB}{0,114,189}  

\newcommand{\gc}[1]{\textcolor{gray}{#1}}
\usepackage{amsthm}
\usepackage{amssymb}
\usepackage{amsmath}
\usepackage{amsfonts}       
\usepackage{nicefrac}       
\usepackage{mathtools}
\newmdtheoremenv[
  outerlinewidth=1pt,
  roundcorner=5pt,
  innertopmargin=10pt,
  innerbottommargin=10pt,
  innerleftmargin=10pt,
  innerrightmargin=10pt,
  backgroundcolor=gray!5,
  linecolor=black
]{boxedtheorem}{Theorem}

\newmdenv[
  linecolor=black,
  outerlinewidth=1pt,
  roundcorner=5pt,
  innertopmargin=8pt,
  innerbottommargin=8pt,
  backgroundcolor=blue!5, 
  userdefinedwidth=\textwidth
]{highlightbox}

\makeatletter
\xapptocmd{\NAT@bibsetnum}{\setlength{\leftmargin}{0pt}\setlength{\itemindent}{\labelwidth}\addtolength{\itemindent}{\labelsep}}{}{}
\makeatother

\title{P-Guide: Parameter-Efficient Prior Steering for Single-Pass CFG Inference}

\author{%
  Xin Peng$^{1}$ \quad
  Ang Gao$^{1}$\thanks{Corresponding author} \\
  $^{1}$School of Physical Science and Technology, \\
  Beijing University of Posts and Telecommunications, Beijing, China \\
  \texttt{anggao@bupt.edu.cn}
}

\begin{document}

\maketitle

\begin{abstract}
Classifier-Free Guidance (CFG) is essential for high-fidelity conditional generation in flow matching, yet it imposes significant computational overhead by requiring dual forward passes at each sampling step. In this work, we address this bottleneck by introducing \textbf{P-Guide}, a framework that achieves high-quality guidance through a single inference pass by modulating only the initial latent state. We further show that, under a first-order approximation, P-Guide is equivalent to CFG in the sense that it steers generation from the prior space, without requiring explicit velocity field extrapolation during sampling. We consider both homoscedastic and \textbf{heteroscedastic} priors, and find that jointly modeling the mean and variance enables adaptive loss attenuation and improved robustness to data uncertainty. Extensive experiments demonstrate that P-Guide reduces inference latency by approximately 50\% while maintaining fidelity and prompt alignment competitive with standard dual-pass CFG baselines.
\end{abstract}

\section{Introduction}

Continuous-time generative modeling frameworks, such as Flow Matching (FM) \cite{lipman2022flow,albergo2023building} and Rectified Flow \cite{liu2023flow}, have established a powerful paradigm for transforming simple prior distributions into complex data manifolds via deterministic Ordinary Differential Equations (ODEs) \cite{song2021scorebased, song2020score}. This line of methods is closely connected to score-based diffusion models \cite{ho2020ddpm, song2021ddim, nichol2021improved}, and has recently progressed toward scalable generative architectures, combining flow-based modeling paradigms with transformer-based architectures \cite{peebles2023dit, esser2024scaling}.

To achieve precise semantic control in large-scale text-to-image synthesis, Classifier-Free Guidance (CFG) \cite{ho2022classifier} has become an indispensable technique in modern generative systems, including Stable Diffusion \cite{rombach2022latent}, SDXL \cite{podell2023sdxl}, Imagen \cite{saharia2022imagen}, DALL·E 2 \cite{ramesh2022dalle2}, DALL·E \cite{ramesh2021dalle}, GLIDE \cite{nichol2021glide}, and recent industrial-scale models such as Stable Diffusion 3 / 3.5 \cite{esser2024scaling, sd35_technical}, FLUX \cite{flux2024}, and large multimodal systems like Qwen-Image \cite{qwenimage2024} and Wanxiang \cite{wanxiang2024}. By linearly extrapolating between conditional and unconditional velocity fields, CFG significantly sharpens prompt alignment and enhances structural fidelity across diffusion and flow-based architectures \cite{dhariwal2021diffusion, esser2024scaling}.

\begin{figure}[t]
  \centering
  \includegraphics[width=\linewidth]{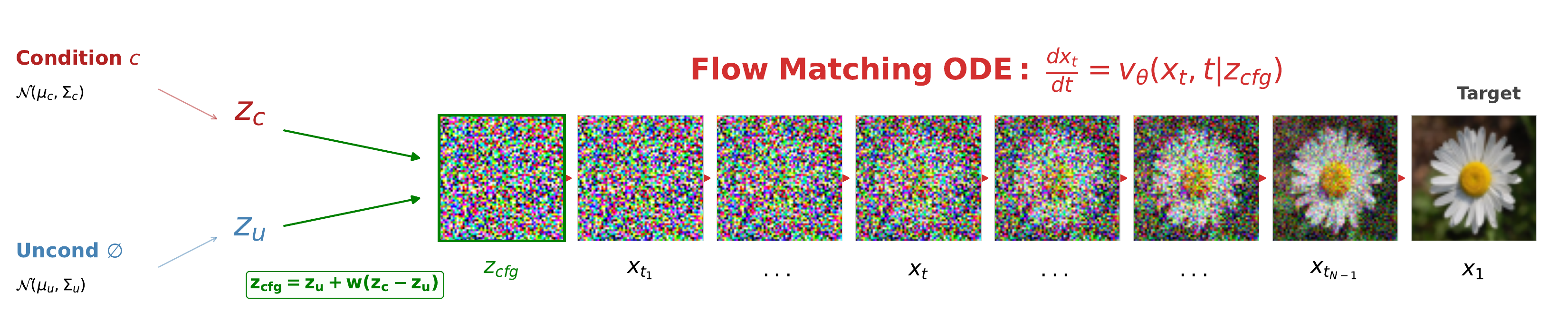}
  \caption{\textbf{Conceptual overview of the P-Guide framework.} Standard CFG requires evaluating the velocity field twice (\(v_{cond}, v_{uncond}\)) at each integration step. P-Guide relocates this guidance to the origin by modulating the initial noise state using a data-dependent prior. Based on our trajectory-level approximation, this initial shift anchors the global ODE path, enabling high-fidelity conditional generation with only a \textbf{single inference pass}.}
  \label{fig:p_guide_schematic}
\end{figure}

Despite its success, the standard implementation of CFG faces a major computational bottleneck: the dual-pass requirement. At each sampling step, the model must be evaluated twice—once for the conditional branch and once for the unconditional one. This doubles the inference latency and remains the primary obstacle for deploying high-fidelity generative flows in real-time or resource-constrained environments, especially in large-scale systems such as SDXL \cite{podell2023sdxl}, SD3 \cite{esser2024scaling}, and FLUX \cite{flux2024}. While existing methods attempt to mitigate this via distillation or step-truncation \cite{salimans2022progressive, meng2022distillation, song2023consistency, luhman2023latent}, they often require intensive retraining or compromise the geometric consistency of the generation path.

In this work, we propose to mitigate this bottleneck by shifting the guidance mechanism from the sampling process to the trajectory origin (see Figure~\ref{fig:p_guide_schematic}). We introduce \textbf{P-Guide} (Prior-Guide), a parameter-efficient framework that achieves conditional control by modulating the initial state in the prior space. Our approach is motivated by a \textbf{trajectory-level approximation perspective}, where we show that a structured shift in the prior space is consistent with velocity field extrapolation along the ODE trajectory under a first-order approximation. By steering the generation from its starting point, P-Guide enables high-quality guided sampling with only a \textbf{single inference pass} per step, leading to a substantial reduction in the computational cost of CFG.

Furthermore, we explore both homoscedastic and heteroscedastic formulations of the prior, where the latter jointly models the conditional mean and variance in the latent space. This allows the model to capture data-dependent uncertainty and implement an adaptive loss attenuation mechanism, improving robustness against noisy conditions. Through extensive experiments on benchmark datasets including ImageNet-1k, we demonstrate that P-Guide delivers a $2\times$ speedup in CFG inference while maintaining fidelity and prompt alignment competitive with standard dual-pass baselines, requiring only a small number of additional parameters.

\paragraph{Our Contributions.}
We shift the guidance mechanism from the sampling process to the trajectory's origin. Our technical contributions include:
\begin{itemize}
    \item \textbf{Trajectory-Level approximation :} A first-order analysis showing that a structured shift in the prior space is consistent with velocity field extrapolation along the ODE evolution.
    \item \textbf{P-Guide Framework:} A parameter-efficient approach enabling single-pass CFG inference by modulating the initial state, reducing the computational cost of guidance by 50\%.
    \item \textbf{Heteroscedastic Modeling:} We study both homoscedastic and heteroscedastic priors, and show that joint mean-variance modeling captures data-dependent uncertainty and enables adaptive loss attenuation.
\end{itemize}

\section{Related Work}

\paragraph{Diffusion and Flow-Based Generative Models.}
Generative modeling has transitioned from stochastic diffusion models \cite{ho2020ddpm, song2021ddim, nichol2021improved, song2021scorebased} to continuous-time frameworks such as Flow Matching (FM) and Rectified Flow \cite{lipman2022flow, liu2023flow}. While diffusion models rely on iterative score estimation to reverse a noising process, FM instead regresses a time-dependent velocity field to transport a simple prior toward the data manifold \cite{lipman2022flow}. Rectified Flow further improves efficiency by learning near-straight trajectories, thereby reducing discretization errors in Ordinary Differential Equation (ODE) solvers \cite{liu2023flow}. These approaches have recently been extended to large-scale transformer-based generative models \cite{peebles2023dit, esser2024scaling}.

\paragraph{Informed and Data-Dependent Priors.}
Beyond standard uninformed priors, recent works demonstrate that context-aware distributions can enhance conditioning efficiency and reduce transport complexity \cite{lee2022priorgrad, crossflow2023, carflow2023, warmstart2023}. Specifically, PriorGrad \cite{lee2022priorgrad} and Warm-Start Diffusion \cite{warmstart2023} adapt priors to conditional statistics to simplify the generative task, while CAR-Flow \cite{carflow2023} and CrossFlow \cite{crossflow2023} relocate distributions via learned shifts or cross-modal mappings to align source and target manifolds. Despite improving training convergence or modeling flexibility, these techniques do not address the inherent dual-pass computational bottleneck imposed by classifier-free guidance during inference.

\paragraph{Conditional Generative Modeling and Classifier-Free Guidance.}
Classifier-Free Guidance (CFG) \cite{ho2022classifier} has become the de facto standard for controllable generation, enhancing semantic alignment by extrapolating between conditional and unconditional predictions. It is widely adopted in modern text-to-image systems, including Stable Diffusion \cite{rombach2022latent}, SDXL \cite{podell2023sdxl}, and more recent flow-based large-scale models such as Stable Diffusion 3 \cite{esser2024scaling} and FLUX \cite{flux2024}. However, the standard implementation requires two forward passes per sampling step, doubling inference cost and limiting deployment in real-time or resource-constrained settings.

\paragraph{Analysis and Control of Guidance Dynamics.}
Recent work has investigated the geometric properties and stability of guided sampling trajectories. CFG-Zero* \cite{cfgzero2023} improves early-stage guidance behavior via optimized scaling, while CFG-MP \cite{cfgmp2023} and Rectified-CFG++ \cite{rectifiedcfgpp2024} incorporate manifold constraints or predictor–corrector schemes to maintain trajectory consistency. CFG-Ctrl \cite{cfgctrl2024} further formulates guidance as a control problem, providing a dynamical systems perspective. More recently, C$^2$FG \cite{gao2026c2fgcontrolclassifierfreeguidance} analyzes guidance through score discrepancy and introduces a principled control mechanism to stabilize conditional generation. Nevertheless, these methods primarily modify the guidance dynamics within the sampling trajectory and do not eliminate the inherent dual-pass computational bottleneck of CFG.

\section{Background}

\subsection{Flow Matching}
Flow Matching (FM) provides a simulation-free framework to train continuous normalizing flows by regressing a time-dependent velocity field $v_{\theta}(x_t, t)$ \cite{lipman2022flow}. Given a source distribution $p_0$ (typically standard Gaussian) and a target data distribution $p_1$, FM constructs a probability path $p_t$ that transports $p_0$ to $p_1$, and is closely related to diffusion models and score-based generative modeling \cite{ho2020ddpm, song2021scorebased, song2021ddim}.

The sample trajectory $x_t$ is governed by the following ODE:
\begin{equation}
    \frac{dx_t}{dt} = v_{\theta}(x_t, t), \quad x_0 \sim p_0.
\end{equation}

For a pair of samples $(x_0, x_1)$, a commonly used construction is the independent conditional probability path, which uses linear interpolation: $x_t = (1 - t)x_0 + tx_1$ \cite{lipman2022flow}. The corresponding target velocity is constant: $u_t(x_t | x_0, x_1) = x_1 - x_0$. The model is trained using the Flow Matching objective:
\begin{equation}
    \mathcal{L}_{FM}(\theta) = \mathbb{E}_{t, x_0, x_1} [\| v_{\theta}(x_t, t) - (x_1 - x_0) \|^2].
\end{equation}

Rectified Flow is a special case of Flow Matching that encourages straight-line trajectories to reduce numerical integration error and improve sampling efficiency \cite{liu2023flow}. Recent extensions further connect FM with transformer-based diffusion models and large-scale generative systems \cite{peebles2023dit, esser2024scaling}.

\subsection{Classifier-Free Guidance (CFG)}
Classifier-Free Guidance (CFG) is the dominant technique for controllable generative modeling in diffusion and flow-based systems \cite{ho2022classifier}. It has become a core component in modern text-to-image systems such as Stable Diffusion \cite{rombach2022latent}, SDXL \cite{podell2023sdxl}, Imagen \cite{saharia2022imagen}, DALL·E 2 \cite{ramesh2022dalle2}, GLIDE \cite{nichol2021glide}, and recent large-scale models including Stable Diffusion 3 \cite{esser2024scaling} and FLUX \cite{flux2024}.

During training, the condition is randomly replaced with a null token $\emptyset$, allowing a single model to learn both conditional velocity $v(x_t, t | y)$ and unconditional velocity $v(x_t, t | \emptyset)$. At inference time, the guided velocity field is computed as:
\begin{equation}
    v_{cfg}(x_t, t) = v_{\theta}(x_t, t | \emptyset) + w (v_{\theta}(x_t, t | y) - v_{\theta}(x_t, t | \emptyset)),
\end{equation}
where $w \geq 1$ is the guidance scale \cite{ho2022classifier}. While CFG significantly improves semantic alignment and sample fidelity, it requires evaluating the neural network twice at every integration step, which doubles computational cost and limits real-time deployment in large-scale systems such as SDXL and FLUX \cite{podell2023sdxl, flux2024}.

\section{Method}

In this section, we present the \textbf{P-Guide} (Prior-Guide) framework. Our core innovation lies in shifting the guidance mechanism from computationally expensive velocity field extrapolation to a structured modulation of the initial state in the prior space.

\subsection{Prior Space Parameterization}
\label{subsec:hetero_prior}
Standard generative flows typically transport an uninformed Gaussian prior $p_0(z) = \mathcal{N}(0, \mathbf{I})$ to a target data distribution $p_1$ \cite{ho2020ddpm, lipman2022flow}. However, recent studies have shown that Gaussian priors are not strictly necessary, and learned or data-dependent priors can significantly improve conditioning efficiency and reduce transport complexity \cite{lee2022priorgrad, crossflow2023, carflow2023, warmstart2023}.
\definecolor{codeblue}{rgb}{0.25,0.5,0.5}
\definecolor{codekw}{rgb}{0.85, 0.18, 0.50}

\definecolor{codesign}{RGB}{0, 0, 255}
\definecolor{codefunc}{rgb}{0.85, 0.18, 0.50}

\lstdefinelanguage{PythonFuncColor}{
  language=Python,
  keywordstyle=\color{blue}\bfseries,
  commentstyle=\color{codeblue},  
  stringstyle=\color{orange},
  showstringspaces=false,
  basicstyle=\ttfamily\small,
  literate=
    {*}{{\color{codesign}* }}{1}
    {-}{{\color{codesign}- }}{1}
    {+}{{\color{codesign}+ }}{1}
    {dataloader}{{\color{codefunc}dataloader}}{1}
    {sample_t_r}{{\color{codefunc}sample\_t\_r}}{1}
    {randn}{{\color{codefunc}randn}}{1}
    {randn_like}{{\color{codefunc}randn\_like}}{1}
    {jvp}{{\color{codefunc}jvp}}{1}
    {stopgrad}{{\color{codefunc}stopgrad}}{1}
    {metric}{{\color{codefunc}metric}}{1}
}

\lstset{
  language=PythonFuncColor,
  backgroundcolor=\color{white},
  basicstyle=\fontsize{9pt}{9.9pt}\ttfamily\selectfont,
  columns=fullflexible,
  breaklines=true,
  captionpos=b,
}

\begin{wrapfigure}{r}{0.5\linewidth}
\vspace{-1.2em}
\centering
\begin{minipage}{0.95\linewidth}

\begin{algorithm}[H]
\caption{{P-Guide}: Training}
\label{alg:pguide_train}
\footnotesize
\setlength{\baselineskip}{8.8pt}

\begin{lstlisting}
# Stage 1: train F_phi

x1, y = sample_data()
mu, sigma = F_phi(y)
loss_prior = (norm(x1 - mu)**2) / (2*sigma**2) 
             + 0.5 * log(sigma**2)


# Stage 2: train v_theta

x1, y = sample_data()
eps = randn_like(x1)
mu, sigma = F_phi(y)
z = mu + sigma * eps

t = sample_uniform()
x_t = (1 - t) * z + t * x1

loss = metric(v_theta(x_t,t) - (x1 - z))
\end{lstlisting}

\end{algorithm}

\vspace{-1.4em}

\begin{algorithm}[H]
\caption{{P-Guide}: Sampling}
\label{alg:pguide_sample}
\footnotesize
\setlength{\baselineskip}{8.8pt}

\begin{lstlisting}
# classifier-free guidance in latent

eps = randn(x_shape)
mu_c, sigma_c = F_phi(y)
mu_u, sigma_u = F_phi(None)
z = mu_u + w*(mu_c - mu_u)
z = z + (sigma_u + w*(sigma_c - sigma_u)) * eps
x = z

for t in schedule:
    x = x + v_theta(x,t) * dt
\end{lstlisting}

\end{algorithm}

\end{minipage}
\vspace{-1.2em}
\end{wrapfigure}

To enable early-stage steering, we parameterize the prior distribution as $p_0(z|y)$ such that the starting point of the trajectory already encodes conditional information.

\paragraph{Homoscedastic P-Guide (Base).}
We first define a baseline modeling scheme where the condition $y$ modulates only the mean of the prior distribution while keeping a fixed unit variance:
\begin{equation}
    z = \epsilon + \mu_{\phi}(y), \quad \epsilon \sim \mathcal{N}(0, \mathbf{I}).
\end{equation}
Under standard regression optimality (see Appendix~\ref{sec:appendix_equivalence} for details), the learnable mapping $\mu_{\phi}(y)$ approximates the conditional expectation $\mathbb{E}[x \mid y]$. We define $z_c = \epsilon + \mu_{\phi}(y)$ and $z_u = \epsilon + \mu_{\phi}(\emptyset)$, so that the initial state shift $z_c - z_u = \mu_{\phi}(y) - \mu_{\phi}(\emptyset)$ serves as a directional anchor for the global trajectory.

\paragraph{Heteroscedastic P-Guide (Extension).}
We extend the above formulation to a heteroscedastic prior, where the condition additionally controls the scale of the latent initialization. Specifically, we parameterize the conditional distribution using neural networks $(\mu_{\phi}(y), \sigma_{\phi}(y))$ and define:
\begin{equation}
    z = \mu_{\phi}(y) + \sigma_{\phi}(y) \odot \epsilon, \quad \epsilon \sim \mathcal{N}(0, \mathbf{I}).
\end{equation}
Here, the initial state $z$ is no longer mere white noise but a "semantic seed" encoding the target mode. We define the conditional seed as $z_c = \mu_{\phi}(y) + \sigma_{\phi}(y)\odot\epsilon$ and the unconditional seed as $z_u = \mu_{\phi}(\emptyset) + \sigma_{\phi}(\emptyset)\odot\epsilon$, where $\emptyset$ represents the null condition.
\subsection{Sequential Two-Stage Training Paradigm}

We propose a decoupled training strategy inspired by probabilistic modeling and heteroscedastic regression \cite{kendall2017uncertainties}, which ensures the prior steering module captures accurate data statistics before the backbone flow model learns transport dynamics \cite{lipman2022flow}.

\paragraph{Stage 1: Heteroscedastic Prior Learning.} 
In the first stage, we train the prior module to minimize the conditional negative log-likelihood (NLL) of the target data $x_1$ given condition $y$. This formulation follows probabilistic maximum likelihood estimation under Gaussian observation noise and enables joint learning of mean and uncertainty \cite{kendall2017uncertainties}:
\begin{equation}
    \mathcal{L}_{prior} = \mathbb{E}_{x_1, y} \left[ \frac{\|x_1 - \mu_{\phi}(y)\|^2}{2\sigma_{\phi}^2(y)} + \frac{1}{2} \log \sigma_{\phi}^2(y) \right].
\end{equation}
This objective induces an adaptive weighting effect, where the gradient with respect to the mean estimator is scaled by the inverse of the predicted variance, thereby attenuating the influence of noisy samples. As a result, regions with high aleatoric uncertainty are down-weighted during training, leading to a more stable semantic prior representation \cite{kendall2017uncertainties}.

\paragraph{Stage 2: Flow Matching.}
After convergence, we freeze the parameters and train the velocity field $v_{\theta}$ using a modified Flow Matching objective, where starting points $z$ are sampled from the learned conditional prior $p_0(z|y)$ defined by the frozen module. We define the conditional probability path as $x_t = (1-t)z + t x_1$, yielding the objective:
\begin{equation}
    \mathcal{L}_{FM}(\theta) = \mathbb{E}_{t, x_1, z \sim p_0(z|y)} \left[ \| v_{\theta}(x_t, t) - (x_1 - z) \|^2 \right].
\end{equation}
Since $z$ is already aligned with condition $y$ and geographically closer to the target manifold, the velocity field $v_{\theta}$ learns straighter trajectories, which is consistent with the efficiency principle of neural ODE-based generative modeling \cite{chen2018neuralode}. This significantly simplifies the transport task and accelerates convergence.

\subsection{trajectory-level approximation}

Traditional CFG operates by linearly extrapolating between conditional and unconditional velocity fields $v_t$ at every integration step \cite{ho2022classifier}. We provide an approximation perspective for P-Guide by showing that this mechanism can be interpreted as a linear shift in the prior space under the assumption of flow consistency \cite{chen2018neuralode, lipman2022flow}.

\begin{boxedtheorem}
\label{thm:equivalence}

Let $\Phi_t: \mathcal{Z} \to \mathcal{X}$ be a flow map induced by a velocity field $v(x_t, t \mid z) = \frac{d}{dt}\Phi_t(z)$, where $z$ denotes the initial latent state and $x_t = \Phi_t(z)$ is the trajectory at time $t$.  
We denote the trajectories starting from the conditional and unconditional initial states as $v(x_t, t \mid z_c)$ and $v(x_t, t \mid z_u)$, respectively.

Under a trajectory linear response assumption, where the flow responds approximately linearly to perturbations in the initial latent space, the velocity difference satisfies:
\begin{equation}
    v(x_t, t \mid z_c) - v(x_t, t \mid z_u) \propto z_c - z_u .
\end{equation}

\end{boxedtheorem}

The result follows from a linear response approximation of the flow dynamics, where changes in the initial latent state propagate approximately linearly along the trajectory. Here, the conditional velocity $v(x_t, t \mid z_c)$ and unconditional velocity $v(x_t, t \mid z_u)$ refer to the velocity fields induced by trajectories initialized from the conditional and unconditional latent states $z_c$ and $z_u$, respectively. In this regime, their difference is governed by the difference between the corresponding initial states. A more detailed derivation based on flow linearization and connections to score-based guidance is provided in Appendix~\ref{sec:appendix_equivalence}. This perspective naturally leads to the interpretation that CFG velocity can be directly controlled by the initial latent shift.

\subsection{Inference: Single-Pass CFG via Prior Steering}

Leveraging the first-order equivalence established in Theorem \ref{thm:equivalence}, \textbf{P-Guide} performs guidance by applying CFG in the prior space before ODE integration begins, i.e., conditioning is injected into the initial latent distribution. As a result, the ODE is solved only once using a single velocity field evaluation, replacing the standard CFG procedure that requires two velocity evaluations per step during integration.

\begin{highlightbox}
\textbf{The P-Guide Core Formula:} We define the guided initial state $z_{cfg}$ by performing the linear combination of learned distribution parameters directly in the prior space:
\begin{equation}
    \label{eq:p_guide_core}
    z_{cfg} = \mu_{\phi}(\emptyset) + w (\mu_{\phi}(y) - \mu_{\phi}(\emptyset)) + [\sigma_{\phi}(\emptyset) + w (\sigma_{\phi}(y) - \sigma_{\phi}(\emptyset))] \odot \epsilon.
\end{equation}
\end{highlightbox}
Once $z_{cfg}$ is computed, generation proceeds by solving the probability flow ODE $\frac{dx_t}{dt} = v_{\theta}(x_t, t)$ starting from $x_0 = z_{cfg}$. This \textbf{single inference pass} completely avoids the need for dual model evaluations per step, reducing the computational cost of CFG by $50\%$.

\subsection{Toy Experiment: Visualizing Trajectory Equivalence}
\label{subsec:toy_experiment}

To provide empirical intuition for the \textbf{P-Guide} framework, we conduct a 2D toy experiment. This visualization demonstrates how a structured shift at the starting point effectively anchors the global trajectory, replacing the need for iterative velocity corrections.

\begin{figure}[h]
  \centering
  \includegraphics[width=\linewidth]{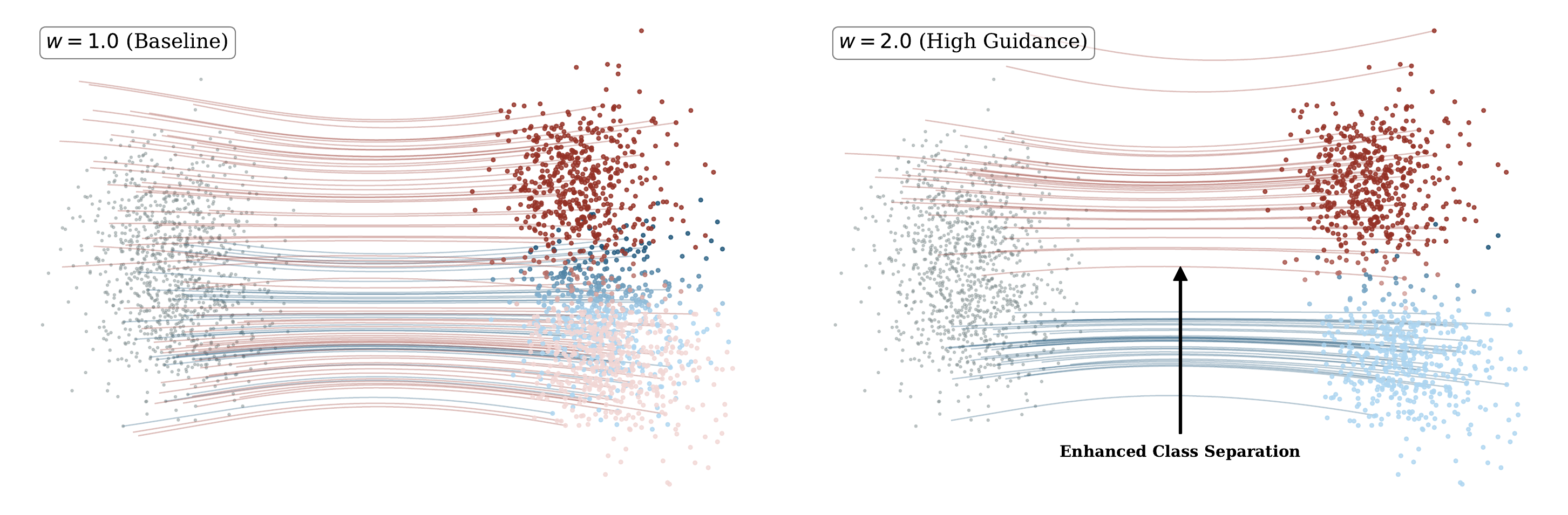}
  \caption{\textbf{Visualizing trajectory steering in a 2D toy setup.} Standard CFG requires per-step corrections in velocity space (dual-pass). P-Guide utilizes the prior shift $z_c - z_u$ as a directional anchor at $t=0$. As shown, increasing $w$ from $1.0$ to $2.0$ causes trajectories to concentrate and separate precisely toward target modes using only a \textbf{single inference pass}.}
  \label{fig:toy_viz}
\end{figure}

\paragraph{Experimental Setup.} 
We construct a 2D dataset where the target distribution $\pi_1$ is multimodal, consisting of two distinct clusters at a radius of $5.0$ units with angles of $0$ and $4$ radians ($\sigma=0.5$).  We train a Rectified Flow model \cite{liu2023flow} to learn the velocity field $v_{\theta}(x_t, t | y)$. Simultaneously, we pre-train the prior module $F_{\phi}(y)$ to estimate the conditional mean $\mu$ of the data clusters.

\paragraph{Trajectory Anchoring through Prior Steering.} 
Standard CFG necessitates two model evaluations per step to compute an extrapolated velocity. In contrast, P-Guide injects the guidance at $t=0$ by shifting the initial state $z$ toward the target mode according to $z_{cfg} = z_u + w(z_c - z_u)$. 

Visualizing the resulting trajectories in Figure \ref{fig:toy_viz} confirms that this initial shift acts as a directional anchor. Since the flow model learns to approximate straight-line transport between distributions, the semantic bias introduced at the origin naturally steers the entire deterministic ODE path. As the guidance scale $w$ increases, the trajectories for different classes exhibit enhanced separation and concentrate more sharply on their respective target modes. This demonstrates that high-quality conditional generation is achievable through a \textbf{single inference pass}, as the trajectory is correctly anchored before sampling begins.

\section{Experiments}

In this section, we evaluate the empirical performance and computational efficiency of \textbf{P-Guide} across three benchmark datasets: MNIST \cite{lecun1998mnist}, CIFAR-10 \cite{krizhevsky2009cifar}, and ImageNet-1k \cite{imagenet2009}. Our objective is to demonstrate that by shifting guidance to the prior space, we achieve a near-doubling of inference speed with negligible parameter overhead and no sacrifice in generation quality.

\subsection{Experimental Setup}
\label{subsec:experimental_setup}

\paragraph{Hardware and Implementation.} All experiments were conducted on a single workstation equipped with one \textbf{NVIDIA GeForce RTX 4090 GPU} (24GB VRAM). We adopt the Rectified Flow (RF) objective \cite{liu2023flow} with a linear probability path $x_t = (1-t)z + t x_1$. For ImageNet-1k, the generative process operates in a latent space compressed by a pre-trained \textbf{AutoKL} model \cite{esser2024scaling} with a storage footprint of \textbf{79.779 MB}. Following our two-stage paradigm, the prior steering module $F_{\phi}(y)$ is first pre-trained to convergence via Negative Log-Likelihood (NLL), consistent with probabilistic modeling in heteroscedastic regression \cite{kendall2017uncertainties}. In our main results, $F_{\phi}(y)$ follows the configuration described in Section~\ref{subsec:hetero_prior}, where the variance $\sigma_{\phi}(y)$ is learnable. 

\paragraph{Extreme Parameter Efficiency.}
A key advantage of P-Guide is its extremely small parameter footprint. The proposed PG module introduces only \textbf{1.247 MB} of additional storage. This minimal overhead makes it lightweight relative to the overall generative model and enables straightforward integration into existing architectures without modifying the backbone design.
We further analyze the effect of scaling the capacity of $F_{\phi}(y)$ in Appendix~\ref{appendix:ablation_param_scale}.

\paragraph{Metrics.}
We report \textbf{Fr\'{e}chet Inception Distance (FID)} \cite{heusel2017fid}, 
\textbf{Spatial FID (sFID)} \cite{parmar2022sfid}, and 
\textbf{Inception Score (IS)} \cite{salimans2016is} 
to quantify sample quality. 
All metrics are computed using 50k generated samples following standard evaluation protocols. 
Efficiency is measured via hardware-agnostic \textbf{GFLOPs} per sampling step. 
We also report total \textbf{Model Size (MB)} to evaluate parameter efficiency.

To evaluate the effectiveness of classifier-free guidance (CFG), we further measure \textbf{generation accuracy} on conditionally generated samples. 
Specifically, for MNIST we use a pretrained CNN classifier achieving 99.1\% accuracy \cite{cnn_mnist_ref}, and for CIFAR-10 we use a pretrained ResNet-56 achieving 92.0\% accuracy \cite{resnet56_cifar_ref}. 
These classifiers are fixed and used solely to assess whether generated samples are consistent with the target labels under different guidance strengths.

\subsection{Results on MNIST and CIFAR-10}
\label{subsec:low_res_results}

For low-resolution datasets, we adopt different backbones for each setting: we use a standard \textbf{U-Net}~\cite{ronneberger2015u,ho2020ddpm} for CIFAR-10, and \textbf{LightningDiT}~\cite{yao2025reconstruction} for MNIST. We compare our single-pass P-Guide against the standard dual-pass Vanilla CFG baseline. For a fair comparison, both CFM and P-Guide are trained for 400K steps under identical settings.
\begin{figure*}[h]
  \centering
  \begin{minipage}{0.48\textwidth}
    \centering
    \includegraphics[width=\linewidth]{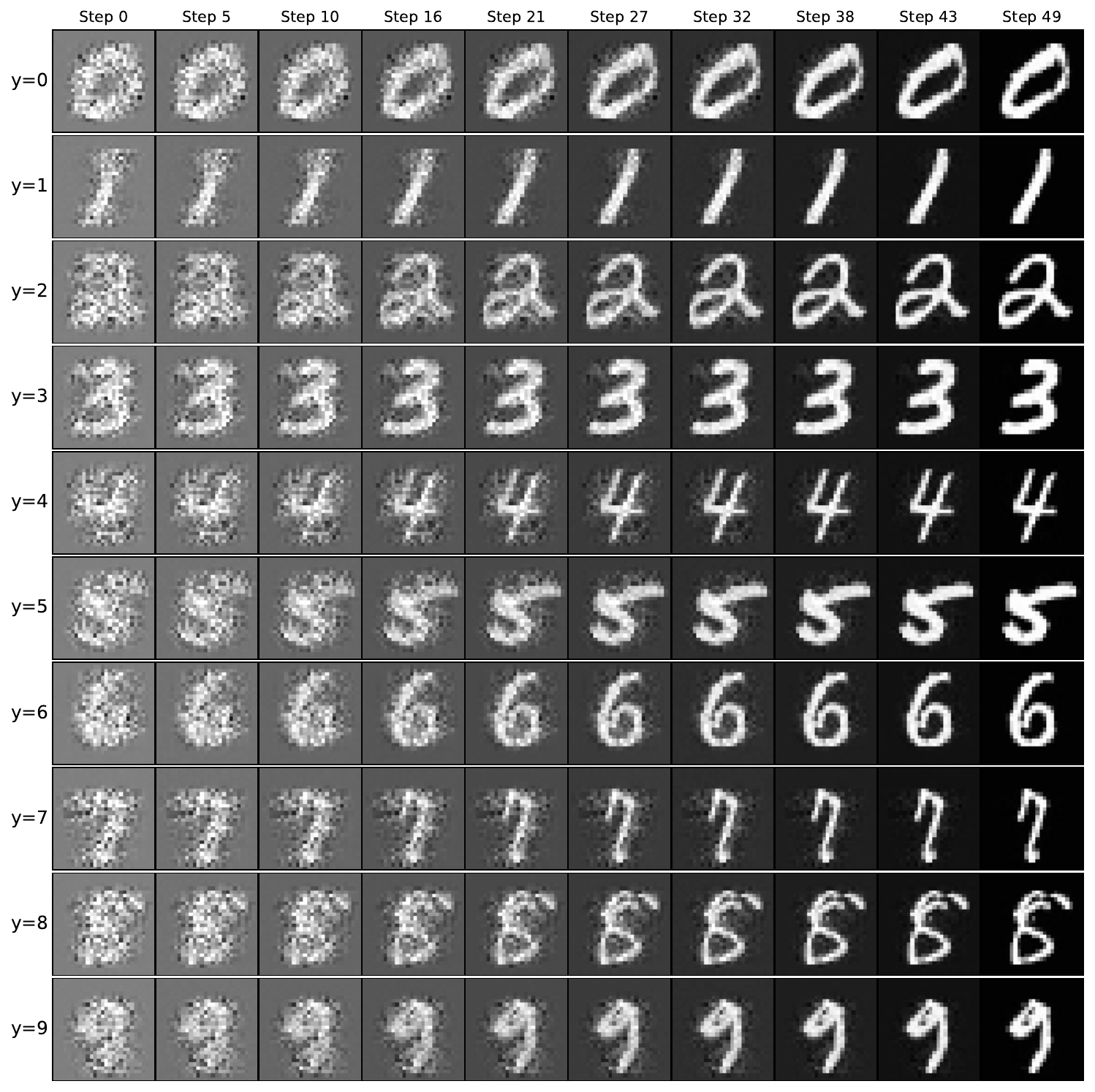}
    \small (a) Trajectory from $z_{cfg}$ to $x_1$ with $w=1.0$
  \end{minipage}
  \hfill
  \begin{minipage}{0.48\textwidth}
    \centering
    \includegraphics[width=\linewidth]{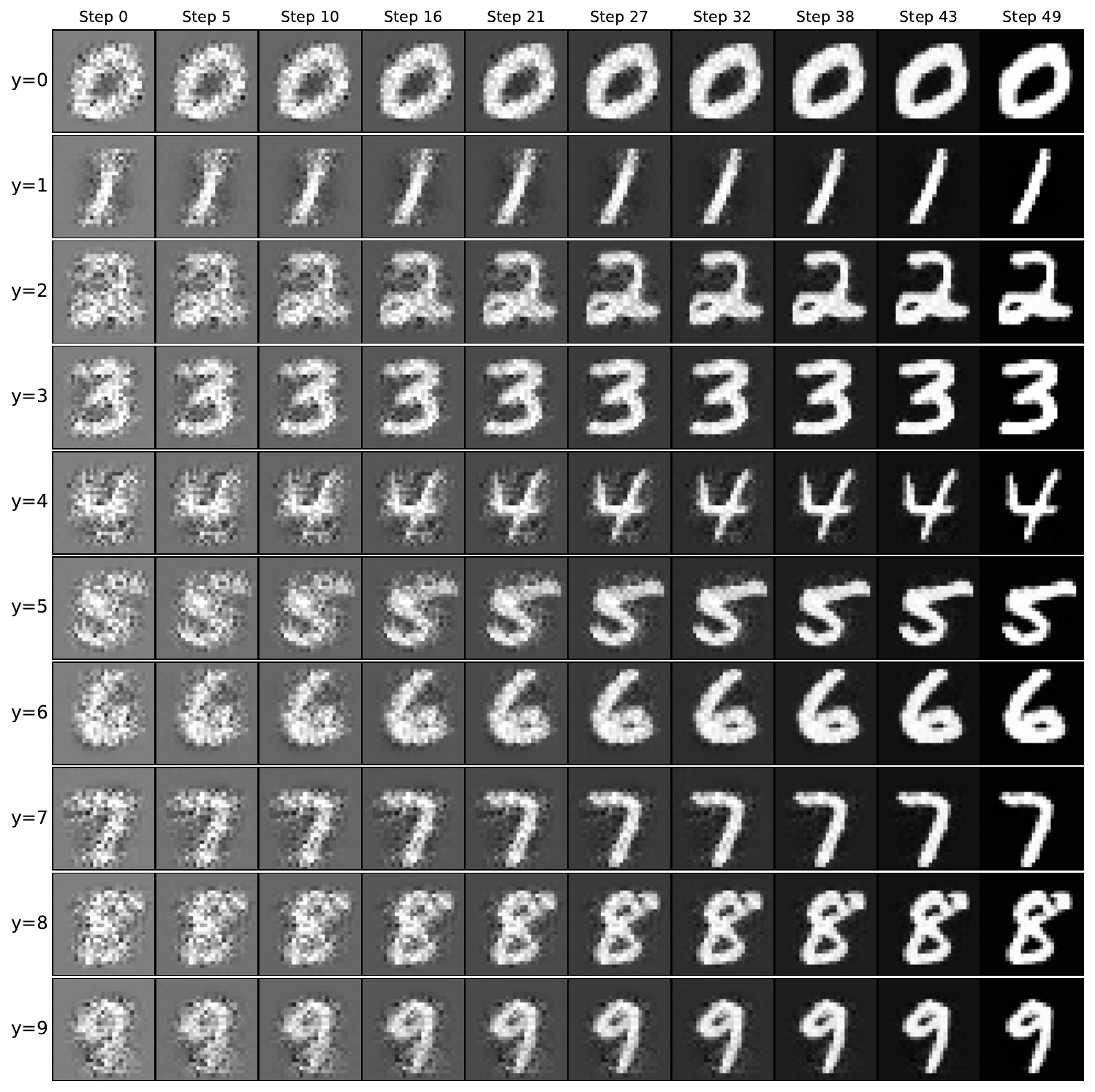}
    \small (b) Trajectory from $z_{cfg}$ to $x_1$ with $w=1.5$
  \end{minipage}
  \caption{\textbf{Visual comparison of P-Guide generation trajectories on MNIST.} Each row shows the evolution from prior ($t=0$) to data ($t=1$). Increasing the guidance scale ($w=1.0 \rightarrow 1.5$) sharpens semantic structure from early steps, confirming effective trajectory-level control from the origin.}
  \label{fig:mnist_trajs}
\end{figure*}
\paragraph{Trajectory Generation Evolution.} 
To intuitively understand the guidance mechanism, we visualize the step-by-step generation process of MNIST digits using the same random seed under two different guidance scales in Figure \ref{fig:mnist_trajs}. 

\paragraph{Quality and Throughput.}
As shown in Table~\ref{tab:low_res_exhaustive}, we sweep the guidance scale $w$ for all methods and observe that increasing $w$ generally improves classification accuracy, indicating stronger conditional alignment. Standard CFM achieves this by combining conditional and unconditional velocity fields at each step, but incurs a $2\times$ computational cost when $w>1.0$ (e.g., GFLOPs increase from $417.6$ to $835.2$ on MNIST), whereas \textbf{P-Guide} performs guidance in prior space and maintains nearly constant computational cost across all $w$ with negligible overhead. Around $w\approx1.0$, the heteroscedastic variant $\text{PG}(\sigma_\phi(y))$ consistently outperforms the fixed-variance version $\text{PG}(\sigma=1)$ in both FID and accuracy, demonstrating the benefit of adaptive variance modeling. However, \textbf{P-Guide} is more sensitive to the choice of $w$ and exhibits a narrower effective range; for example, on CIFAR-10 performance degrades at $w=1.5$, while standard CFM remains stable and continues improving up to $w=3.0$. Overall, \textbf{P-Guide} achieves competitive performance within a limited $w$ range without incurring additional cost, whereas standard CFM attains better peak performance and robustness at the expense of doubled computation; we further show in Appendix~\ref{appendix:pg_cfg_compatibility} that the guidance scale in P-Guide preserves the essential behavior of CFG.

\begin{table*}[h]
  \caption{Quantitative results on CIFAR-10 and MNIST. All P-Guide results are obtained via a \textbf{single inference pass}. GFLOPs are reported per image generation (50 steps). FID and IS retain 3 significant figures; sFID retains 2 significant figures. Acc denotes classification accuracy (\%).}
  \label{tab:low_res_exhaustive}
  \centering
  \small
  \setlength{\tabcolsep}{2.pt}
  \begin{tabular}{l c c c c c c c c c c c c}
    \toprule
    & & & \multicolumn{5}{c}{CIFAR-10} & \multicolumn{5}{c}{MNIST} \\
    \cmidrule(r){4-8} \cmidrule(r){9-13}
    Method & Pass & $w$ 
    & FID $\downarrow$ & sFID $\downarrow$ & IS $\uparrow$ & Acc (\%) $\uparrow$ & GFLOPs $\downarrow$ 
    & FID $\downarrow$ & sFID $\downarrow$ & IS $\uparrow$ & Acc (\%) $\uparrow$ & GFLOPs $\downarrow$ \\
    \midrule
     CFM & 1 & 1.0 & 9.44 & 0.0063 & 8.61 & 70.53 & 710.4337  & 5.87 & 0.0061 & 2.10 & 95.468 & 417.5978 \\
    \gc{CFM} & \gc{2} & \gc{1.5} & \gc{4.00} & \gc{0.0017} & \gc{9.52} & \gc{86.07} & \gc{1420.8641} & \gc{5.94} & \gc{0.0061} & \gc{2.10} & \gc{99.608} & \gc{835.1956} \\
    \gc{CFM} & \gc{2} & \gc{2.0} & \gc{2.54} & \gc{0.0004} & \gc{10.10} & \gc{92.34} & \gc{1420.8641} & \gc{6.12} & \gc{0.0066} & \gc{2.09} & \gc{99.938} & \gc{835.1956} \\
    \gc{CFM} & \gc{2} & \gc{3.0} & \gc{3.72} & \gc{0.0009} & \gc{10.42} & \gc{96.63} & \gc{1420.8641} & \gc{11.5} & \gc{0.0120} & \gc{2.06} & \gc{99.982} & \gc{835.1956} \\
    \midrule
    PG($\sigma_\phi(y)$) & 1 & 0.9 & 9.22 & 0.0049 & 8.82 & 80.78 & 713.3849 & 2.39 & 0.0015 & 2.11 & 97.270 & 417.6066 \\
    PG($\sigma_\phi(y)$) & 1 & 1.0 
    & \cellcolor{gray!15}\textbf{6.51} & \cellcolor{gray!15}\textbf{0.0026} &  \cellcolor{gray!15}\textbf{9.07} & 83.99 & 711.9093 
    & \cellcolor{gray!15}\textbf{1.40} & \cellcolor{gray!15}\textbf{0.0007} & 2.11 & 98.974 & 417.6022 \\
    PG($\sigma_\phi(y)$) & 1 & 1.1 & 10.48 & 0.0041 & 9.03 & \cellcolor{gray!15}\textbf{86.00} & 713.3849 & 2.18 & 0.0011 & 2.09 & 99.520 & 417.6066 \\
    PG($\sigma_\phi(y)$) & 1 & 1.5 &  38.46 &  0.0164 & 7.65 & 76.75 & 713.3849 & 13.8 & 0.0098 & 2.02 & \cellcolor{gray!15}\textbf{99.962} & 417.6066 \\
    \midrule
    PG($\sigma=1$) & 1 & 0.9 & 12.59 & 0.0081 &  8.32 & 68.53 & 713.3849 & 3.21 & 0.0029 &  \cellcolor{gray!15}\textbf{2.13} & 97.480 & 417.6066 \\
    PG($\sigma=1$) & 1 & 1.0 & 9.35&  0.0057 &  8.61 & 72.49 & 711.9093 & 3.31 & 0.0030 & \cellcolor{gray!15}\textbf{2.13} & 98.216 & 417.6022 \\
    PG($\sigma=1$) & 1 & 1.1 & 8.72 &  0.0044&  8.67 & 76.69 & 713.3849 & 3.61& 0.0032 & 2.12 & 98.606 &417.6066 \\
    PG($\sigma=1$) & 1 & 1.5 & 18.23 &  0.0060&  8.69 & 81.12 & 713.3849 & 6.73 & 0.0055 & 2.08 & 99.510 & 417.6066 \\
    
    \bottomrule
  \end{tabular}
\end{table*}

\subsection{Scalability on ImageNet-1k}
\begin{figure*}[h]
  \centering
  \scalebox{0.95}{
  \begin{minipage}{0.48\textwidth}
    \centering
    \includegraphics[width=\linewidth]{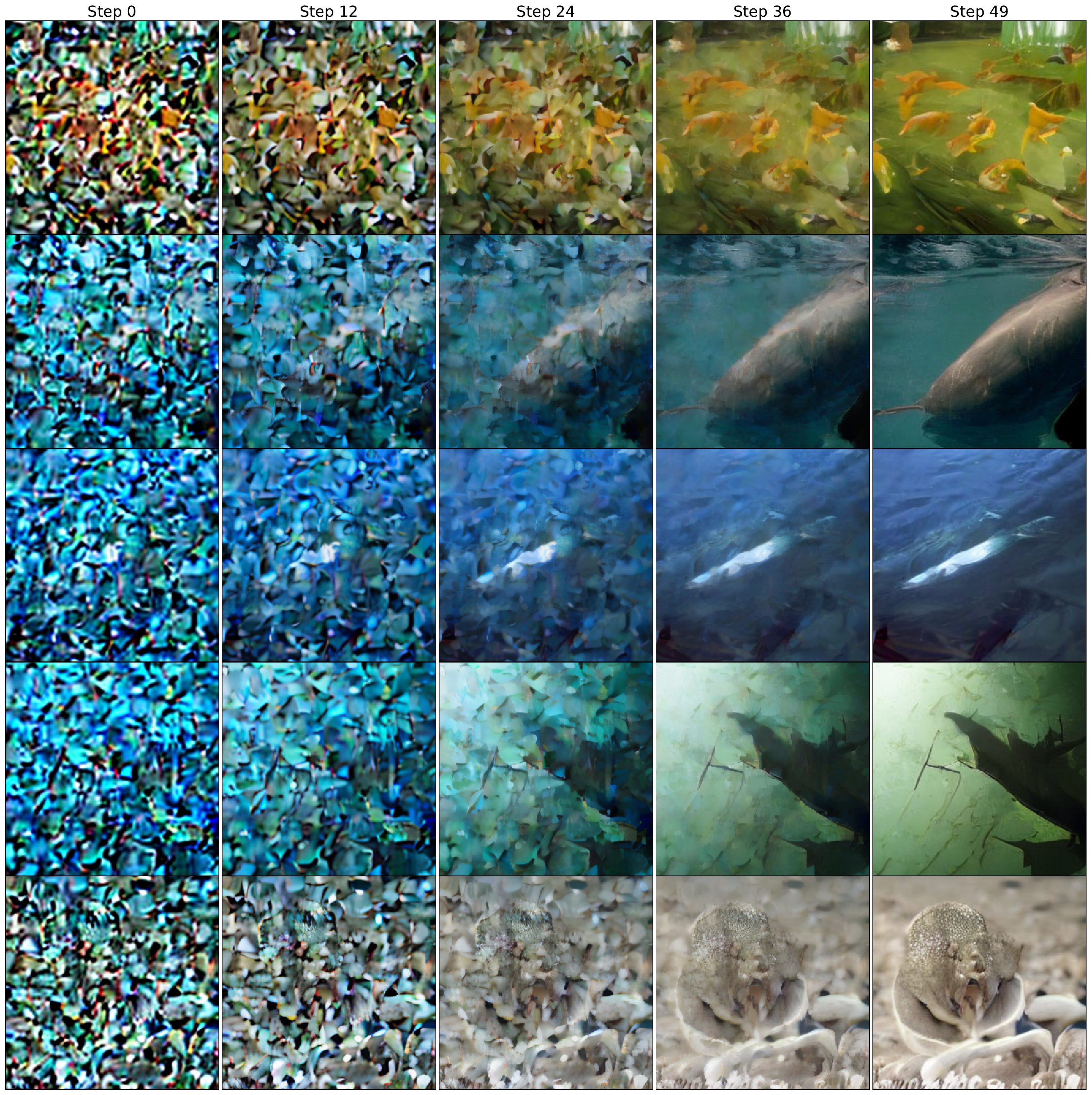}
    \small (a) Trajectory from $z_{cfg}$ to $x_1$ with $w=1.0$
  \end{minipage}
  \hfill
  \begin{minipage}{0.48\textwidth}
    \centering
    \includegraphics[width=\linewidth]{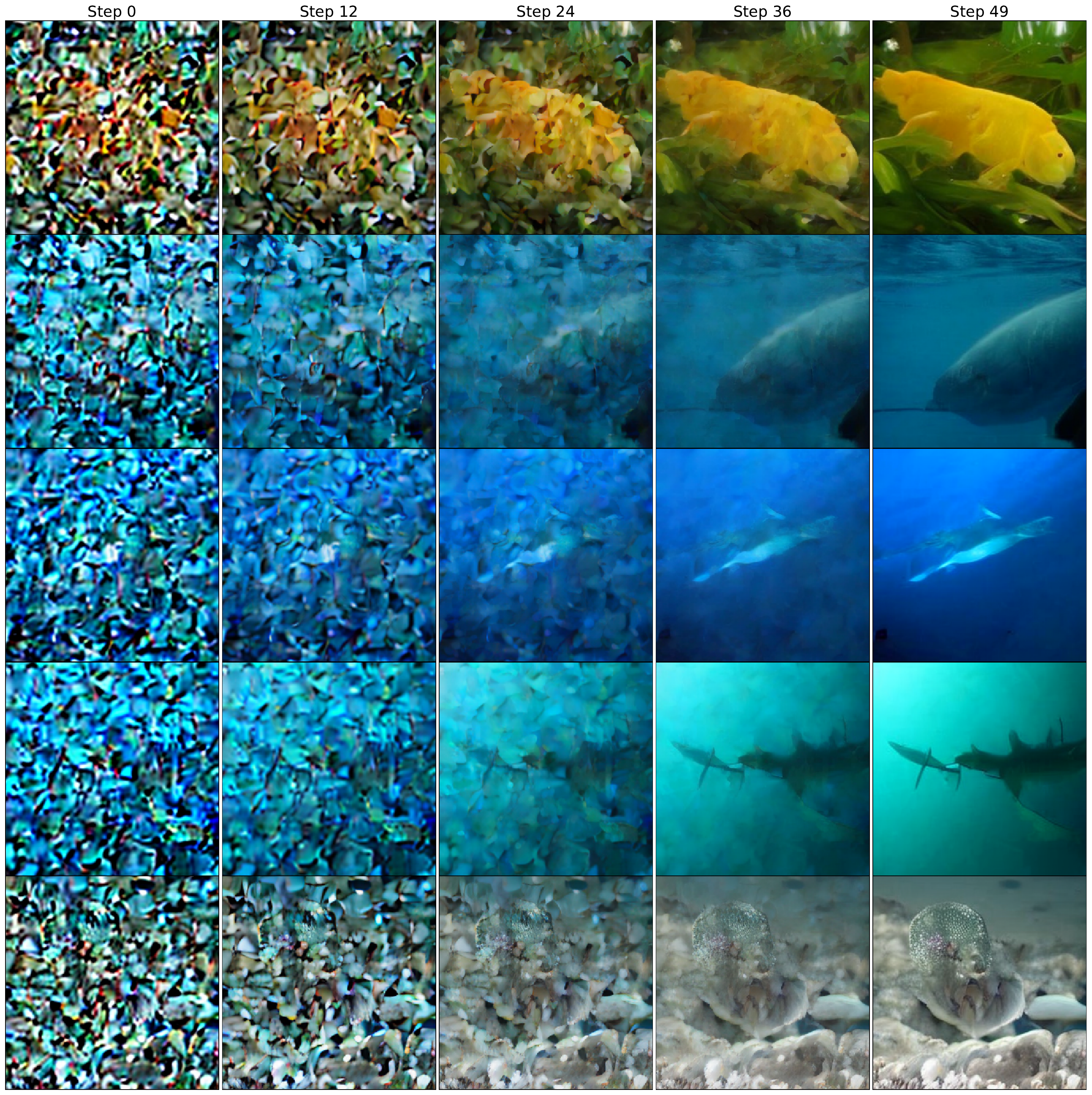}
    \small (b) Trajectory from $z_{cfg}$ to $x_1$ with $w=1.1$
  \end{minipage}}
\caption{\textbf{Visual comparison of P-Guide trajectories on ImageNet.} Each row shows the evolution from prior ($t=0$) to data ($t=1$). Increasing the guidance scale ($w=1.0 \rightarrow 1.1$) sharpens semantic structure from early steps, confirming effective trajectory-level control from the origin.}
\label{fig:ImageNet_trajs}
\end{figure*}
To assess scalability to high-resolution manifolds ($256 \times 256$), we evaluate P-Guide using both a \textbf{U-Net}~\cite{ronneberger2015u,ho2020ddpm} and a \textbf{Diffusion Transformer (DiT-B/2)}\cite{peebles2023dit}. To reduce computational costs, the Stage-2 flow model is initialized with weights from a pre-trained CFM model~\cite{dao2023flow} (originally trained for $\sim$10M steps) and further optimized for only 400K steps, requiring approximately 4\% of the original training budget.

\paragraph{Architecture Comparison.}
Table~\ref{tab:imagenet_exhaustive} highlights that P-Guide introduces only a minimal computational overhead while maintaining a single-pass inference paradigm across both architectures. As evidenced by the \textbf{Size} column, this improvement is achieved by adding merely \textbf{1.247 MB} of conditional parameters, corresponding to a negligible storage increase ($<0.5\%$ for U-Net).
Crucially, this efficient adaptation is enabled by our heteroscedastic modeling (Stage 1), which provides \textbf{Adaptive Loss Attenuation} \cite{kendall2017uncertainties}. This mechanism stabilizes optimization during prior learning and facilitates fast convergence to a high-quality conditional representation, even under limited fine-tuning budgets.

\paragraph{Discussion on Sampling Quality Improvement.} Similar to the original Classifier-Free Guidance theory\cite{cfgctrl2024}, P-Guide demonstrates the ability to improve generation fidelity by adjusting the guidance scale $w$ within a single inference pass. As shown in Table~\ref{tab:imagenet_exhaustive}, increasing $w$ from 1.0 to a moderate range (e.g., 1.1 for U-Net and 1.2 for DiT-B/2) leads to a clear reduction in FID, indicating that prior-space steering effectively sharpens the sample distribution. For example, the U-Net backbone FID improves from 27.78 to 22.33 at $w=1.1$, while DiT-B/2 reaches its best performance of 25.07 at $w=1.2$. We observe that the heteroscedastic and fixed-variance variants achieve comparable performance across different $w$, suggesting that both formulations provide similar guidance effects in this regime. However, further increasing $w$ leads to degradation (e.g., $w=1.5$), indicating a limited effective range of prior-space guidance. Overall, these results confirm that P-Guide can reproduce the quality improvement behavior of CFG within a single-pass framework, while maintaining constant computational cost.

\begin{table}[h]
  \caption{ImageNet-1k ($256 \times 256$) performance across various guidance scales. All P-Guide results utilize a \textbf{single inference pass}. Model size is reported as backbone + PG module (AutoKL 79.779MB excluded). GFLOPs are reported per image generation (50 steps).}
  \label{tab:imagenet_exhaustive}
  \centering
  \small
  \setlength{\tabcolsep}{3.0pt}
  \begin{tabular}{l cccc cccc}
    \toprule
    & \multicolumn{4}{c}{U-Net + \textbf{P-Guide}} & \multicolumn{4}{c}{DiT-B/2 + \textbf{P-Guide}} \\
    \cmidrule(lr){2-5} \cmidrule(lr){6-9}
    Scale $w$ 
    & \multicolumn{2}{c}{FID $\downarrow$} & GFLOPs $\downarrow$ & Size (MB) 
    & \multicolumn{2}{c}{FID $\downarrow$} & GFLOPs $\downarrow$ & Size (MB) \\
    \cmidrule(lr){2-3} \cmidrule(lr){6-7}
    & $\sigma=1$ & $\sigma_{\phi}(y)$ & & & $\sigma=1$ & $\sigma_{\phi}(y)$ & & \\
    \midrule
    1.0 & 28.40 & 27.78 & 3654.3964 & 388.55 + 1.247 & 32.84 & 33.68 & 2301.60 & 124.45 + 1.247 \\
    1.1 & 22.65 & \cellcolor{gray!15}\textbf{22.33} & 3655.8744 & 388.55 + 1.247 & 26.41 & 26.80 & 2303.08 & 124.45 + 1.247 \\
    1.2 & 22.67 & 23.01 & 3655.8744 & 388.55 + 1.247 & \cellcolor{gray!15}\textbf{24.61} & 25.07 & 2303.08 & 124.45 + 1.247 \\
    1.5 & 43.38 & 44.53 & 3655.8744 & 388.55 + 1.247 & 38.72 & 38.43 & 2303.08 & 124.45 + 1.247 \\
    \bottomrule
  \end{tabular}
\end{table}
\section{Conclusion}

We have presented P-Guide, a principled and parameter-efficient framework for single-pass conditional inference in Flow Matching. By shifting the guidance mechanism from iterative velocity-field extrapolation to the trajectory origin, P-Guide effectively mitigates the dual-pass computational bottleneck, achieving a $2\times$ throughput acceleration with negligible overhead. Our theoretical derivation establishes the trajectory-level approximation of prior-space steering, while heteroscedastic modeling provides robustness via adaptive loss attenuation. Crucially, P-Guide remains fully compatible with standard CFG (see Appendix~\ref{appendix:pg_cfg_compatibility}), acting as a complementary enhancement that preserves the effectiveness of traditional guidance signals without disrupting the underlying flow dynamics. We hope our work provides a scalable foundation for high-resolution generative systems and encourages further exploration of source-space control in continuous-time generative models.

\bibliographystyle{plainnat}  
\small
\bibliography{Reference}
\normalsize


\newpage

\appendix

\section{Proof of trajectory-level approximation}
\label{sec:appendix_equivalence}

In this section, we provide a detailed derivation showing that classifier-free guidance (CFG) applied in the prior space induces, to first order, the same trajectory perturbation as guidance performed directly in the velocity field. Throughout this appendix, we use the following notation:
\begin{itemize}
    \item $z \in \mathbb{R}^d$ denotes the initial latent/prior state,
    \item $\Phi_t(z)$ denotes the flow map of the ODE at time $t$,
    \item $v(x,t;z)$ denotes the velocity field induced by the initial state $z$,
    \item $\epsilon \sim \mathcal{N}(0,I)$ denotes the shared base noise used to couple conditional and unconditional trajectories.
\end{itemize}

\subsection{Preliminaries}
\label{subsec:prelims}

Let $x \sim p(x)$ be the data distribution and let $y$ denote the condition. The unconditional score is
\begin{equation}
    s_t(x) = \nabla_x \log p_t(x),
\end{equation}
and the conditional score is
\begin{equation}
    s_t(x \mid y) = \nabla_x \log p_t(x \mid y).
\end{equation}
By Bayes' rule,
\begin{equation}
    \label{eq:score_bayes_appendix}
    s_t(x \mid y) - s_t(x) = \nabla_x \log p_t(y \mid x).
\end{equation}
Hence, standard score-space CFG can be written as
\begin{equation}
    \label{eq:cfg_score_def_appendix}
    s_{\mathrm{cfg}}(x \mid y)
    = s_t(x) + w \bigl(s_t(x \mid y) - s_t(x)\bigr)
    = s_t(x) + w \nabla_x \log p_t(y \mid x).
\end{equation}

\subsection{Flow Matching as a Conditioned Dynamical System}
\label{subsec:fm_ode}

In Flow Matching, generation is described by an ODE
\begin{equation}
    \label{eq:fm_ode_appendix}
    \frac{d x_t}{d t} = v_\theta(x_t,t;z),
    \qquad x_0 = z.
\end{equation}
We denote the corresponding flow map by
\begin{equation}
    x_t = \Phi_t(z).
\end{equation}
Accordingly, the velocity along the trajectory is
\begin{equation}
    v(x_t,t;z) = \frac{d}{dt}\Phi_t(z).
\end{equation}

We consider two initial states constructed by the prior module:
\begin{equation}
    \label{eq:prior_defs_appendix}
    z_c = \mu_\phi(y) + \sigma_\phi(y)\odot \epsilon,
    \qquad
    z_u = \mu_\phi(\emptyset) + \sigma_\phi(\emptyset)\odot \epsilon,
\end{equation}
where the same noise $\epsilon$ is used in both branches. This shared-noise coupling is essential: it allows us to compare conditional and unconditional trajectories as two perturbations around a common random base point.

\subsection{Main Assumptions}
\label{subsec:assumptions}

We make the following mild assumptions.

\paragraph{Assumption 1 (Regular flow map).}
For each fixed $t$, the map $\Phi_t:\mathbb{R}^d \to \mathbb{R}^d$ is twice continuously differentiable in $z$.

\paragraph{Assumption 2 (Shared-noise coupling).}
Conditional and unconditional prior states are coupled by the same base noise $\epsilon$, as in Eq.~\eqref{eq:prior_defs_appendix}.

\paragraph{Assumption 3 (Local perturbation regime).}
We consider a first-order approximation regime with respect to the prior shift
\begin{equation}
    \Delta z \coloneqq z_c - z_u,
\end{equation}
where higher-order terms in the Taylor expansion around $z_u$ are neglected.

\paragraph{Assumption 4 (Optimal prior regression).}
The prior module is trained by Gaussian NLL, so that its mean prediction is the Bayes estimator under the model class:
\begin{equation}
    \mu_\phi(y) \approx \mathbb{E}[x \mid y],
    \qquad
    \mu_\phi(\emptyset) \approx \mathbb{E}[x].
\end{equation}
When the input is a noisy state $x_t$, the analogous conditional mean is $\mathbb{E}[x_0 \mid x_t,y]$.

\subsection{A Useful Linearization Lemma}
\label{subsec:linearization_lemma}

\begin{lemma}[Local linear response of the flow map]
\label{lem:linear_response}
Under Assumption 1 and Assumption 3, for every fixed $t$,
\begin{equation}
    \Phi_t(z_c)
    =
    \Phi_t(z_u) + J_t(z_u)\,\Delta z + R_t,
\end{equation}
where
\begin{equation}
    J_t(z_u) \coloneqq \nabla_z \Phi_t(z)\big|_{z=z_u}
\end{equation}
is the flow Jacobian, and the remainder satisfies
\begin{equation}
    \|R_t\| \le C_t \|\Delta z\|^2
\end{equation}
for some constant $C_t$ depending on the local curvature of $\Phi_t$.
\end{lemma}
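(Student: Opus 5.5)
The plan is to apply Taylor's theorem with integral remainder to the map $\Phi_t$ along the segment joining $z_u$ and $z_c$.

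\textbf{Step 1 (path).} Fix $t$ and write $\gamma(s) = z_u + s\,\Delta z$ for $s\in[0,1]$. Define $g(s) = \Phi_t(\gamma(s))$. By Assumption 1, $g$ is $C^2$ on $[0,1]$.

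\textbf{Step 2 (expansion).} Taylor's formula with integral remainder gives
\begin{equation}
g(1) = g(0) + g'(0) + \int_0^1 (1-s)\, g''(s)\, ds .
\end{equation}
By the chain rule, $g'(0) = J_t(z_u)\,\Delta z$ and $g''(s) = \nabla_z^2 \Phi_t(\gamma(s))[\Delta z,\Delta z]$. So we obtain the claimed identity with
\begin{equation}
R_t = \int_0^1 (1-s)\, \nabla_z^2 \Phi_t(\gamma(s))[\Delta z,\Delta z]\, ds .
\end{equation}
This step is routine. The expansion is applied componentwise, or equivalently to the vector-valued function $g$, since the integral form is valid for vector-valued maps.

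\textbf{Step 3 (bound).} This is the only delicate point. A uniform constant $C_t$ requires the second derivative to be bounded along the segment. Since $\nabla_z^2\Phi_t$ is continuous and the segment $\{\gamma(s)\}$ is compact, the quantity
\begin{equation}
M_t = \sup_{s\in[0,1]} \|\nabla_z^2 \Phi_t(\gamma(s))\|
\end{equation}
is finite. Bounding the integrand and using $\int_0^1(1-s)\,ds = \tfrac12$ gives $\|R_t\| \le \tfrac12 M_t \|\Delta z\|^2$, so we take $C_t = \tfrac12 M_t$.

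The subtlety is that $C_t$ then depends on $z_u$ and $\Delta z$ through the segment. Under Assumption 3, $\Delta z$ is confined to a fixed neighborhood of $z_u$. I would therefore take the supremum over a closed ball $\bar B(z_u,r)$ that contains all admissible $z_c$. This yields a constant depending only on the local curvature of $\Phi_t$ near $z_u$, as the statement asserts.

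If uniformity in the random base point $z_u$ is needed, I would add a global bound on $\nabla_z^2\Phi_t$. Such a bound follows from Lipschitz and $C^2$-boundedness of $v_\theta$ in $x$ via the variational equations, combined with Grönwall's inequality. That refinement is not required for the lemma as stated.
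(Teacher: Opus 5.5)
Your proof is correct and follows the same route as the paper, which simply invokes the second-order Taylor expansion of the $C^2$ map $\Phi_t$ around $z_u$ and writes the remainder as $O(\|\Delta z\|^2)$. Your integral-remainder form with $C_t = \tfrac12 \sup_{z \in \bar B(z_u,r)} \|\nabla_z^2 \Phi_t(z)\|$ makes that constant explicit, and you are right that the constant only has content once $\Delta z$ is restricted to a bounded neighborhood of $z_u$, a point the paper leaves implicit.
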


\begin{proof}
This is the standard second-order Taylor expansion of a $C^2$ map around $z_u$. Since $\Phi_t$ is twice continuously differentiable in $z$, we have
\begin{equation}
    \Phi_t(z_c)
    =
    \Phi_t(z_u)
    +
    \nabla_z \Phi_t(z_u)\,(z_c-z_u)
    +
    O(\|z_c-z_u\|^2).
\end{equation}
Letting $\Delta z = z_c-z_u$ gives the claim.
\end{proof}

\subsection{Velocity Perturbation Induced by Prior Shift}
\label{subsec:velocity_shift}

\begin{lemma}[First-order velocity difference]
\label{lem:velocity_diff}
Under Assumption 1--3,
\begin{equation}
    v(x_t,t;z_c) - v(x_t,t;z_u)
    =
    \frac{d}{dt}\bigl(\Phi_t(z_c)-\Phi_t(z_u)\bigr)
    =
    \dot{J}_t(z_u)\,\Delta z + \tilde{R}_t,
\end{equation}
where $\|\tilde{R}_t\| = O(\|\Delta z\|^2)$.
\end{lemma}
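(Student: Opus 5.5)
The plan is to differentiate the linear-response expansion of Lemma~\ref{lem:linear_response} in time. The first equality in the statement is definitional. By the definition $v(x_t,t;z) = \frac{d}{dt}\Phi_t(z)$, applied once with $z=z_c$ and once with $z=z_u$, subtracting gives
\begin{equation}
    v(x_t,t;z_c) - v(x_t,t;z_u) = \frac{d}{dt}\Phi_t(z_c) - \frac{d}{dt}\Phi_t(z_u) = \frac{d}{dt}\bigl(\Phi_t(z_c)-\Phi_t(z_u)\bigr).
\end{equation}
Here $x_t$ is read as the point on the respective trajectory, consistent with the notation of the preliminaries. By Assumption~2, $z_c$ and $z_u$ share the same $\epsilon$, so $\Delta z$ is a fixed vector that does not depend on $t$.

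For the second equality, I will substitute the expansion $\Phi_t(z_c)-\Phi_t(z_u) = J_t(z_u)\Delta z + R_t$ and differentiate in $t$. Since $\Delta z$ is constant in time, this yields $\dot J_t(z_u)\Delta z + \dot R_t$, and I set $\tilde R_t \coloneqq \dot R_t$.

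The existence and form of $\dot J_t$ I will obtain from the variational equation. Differentiating the ODE $\partial_t\Phi_t(z) = v_\theta(\Phi_t(z),t)$ with respect to $z$ gives
\begin{equation}
    \dot J_t(z_u) = \nabla_x v_\theta(\Phi_t(z_u),t)\,J_t(z_u), \qquad J_0 = I.
\end{equation}
This requires $v_\theta$ to be $C^1$ in $x$, which I will assume alongside Assumption~1.

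The main obstacle is controlling $\dot R_t$: a bound $\|R_t\|\le C_t\|\Delta z\|^2$ does not by itself bound the time derivative of $R_t$. I will avoid differentiating the estimate and instead expand the velocity difference directly. Write
\begin{equation}
    \dot R_t = v_\theta(\Phi_t(z_c),t) - v_\theta(\Phi_t(z_u),t) - \nabla_x v_\theta(\Phi_t(z_u),t)\,J_t\Delta z.
\end{equation}
Next, Taylor-expand $v_\theta$ in $x$ around $\Phi_t(z_u)$ to second order, which needs $v_\theta$ to be $C^2$ in $x$ (a strengthening of Assumption~1 that I will state). This gives
\begin{equation}
    \dot R_t = \nabla_x v_\theta\,\bigl(\Phi_t(z_c)-\Phi_t(z_u)-J_t\Delta z\bigr) + O\bigl(\|\Phi_t(z_c)-\Phi_t(z_u)\|^2\bigr) = \nabla_x v_\theta\, R_t + O(\|\Delta z\|^2).
\end{equation}
The last step uses two facts: $\|\Phi_t(z_c)-\Phi_t(z_u)\| = O(\|\Delta z\|)$, which follows from Lemma~\ref{lem:linear_response}, and $\|R_t\| = O(\|\Delta z\|^2)$. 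Bounding $\nabla_x v_\theta$ by a local Lipschitz constant along the trajectory then gives $\|\tilde R_t\| = O(\|\Delta z\|^2)$, as claimed.

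To connect with the boxed Theorem~\ref{thm:equivalence}, I will remark that the velocity difference is the linear map $\dot J_t(z_u)$ applied to $\Delta z$. Hence it is proportional to $z_c - z_u$ under the linear-response assumption.
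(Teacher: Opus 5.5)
Your argument is correct and follows the paper's skeleton: the definition of $v$, substitution of Lemma~\ref{lem:linear_response}, then differentiation in $t$. It treats the one nontrivial step differently.

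The paper differentiates $J_t(z_u)\Delta z + R_t$ term by term and simply asserts $\tilde R_t = O(\|\Delta z\|^2)$ ``under the smoothness assumption.'' As you note, this does not follow from Assumption~1. That assumption gives $C^2$ regularity in $z$ at each fixed $t$, but no control of time derivatives, not even the existence of $\dot J_t$.

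You close this gap in three moves:
\begin{itemize}
\item you obtain $\dot J_t = \nabla_x v_\theta\,J_t$ from the variational equation;
\item you write $\dot R_t$ explicitly through the ODE;
\item you Taylor-expand $v_\theta$ in space to get $\dot R_t = \nabla_x v_\theta\,R_t + O(\|\Delta z\|^2)$.
\end{itemize}
The cost is a hypothesis on the vector field ($C^2$ in $x$). It implies Assumption~1 but is strictly stronger than Assumptions~1--3 as stated. The gain is a genuine proof, plus the interpretable form $\dot J_t(z_u)\Delta z = \nabla_x v_\theta(x_t^{(u)},t)\,J_t(z_u)\Delta z$.

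A repair closer to the paper's wording would assume $\partial_t\nabla_z^2\Phi_t$ exists and is continuous. One would then differentiate the integral remainder $R_t=\int_0^1(1-s)\,\nabla_z^2\Phi_t(z_u+s\Delta z)[\Delta z,\Delta z]\,ds$ under the integral sign.

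Two minor points:
\begin{itemize}
\item You take the field to be a single $v_\theta(x,t)$, which matches Stage~2 training, whereas the appendix writes $v_\theta(x_t,t;z)$. If the field truly depended on $z$, the variational equation would gain a $\nabla_z v_\theta$ term. Expanding jointly in $(x,z)$ still gives the same bound.
\item In your closing remark, $\dot J_t(z_u)\Delta z$ is linear in $\Delta z$ but generally not parallel to it. So ``proportional'' holds only in that weaker sense.
\end{itemize}
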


\begin{proof}
By definition,
\begin{equation}
    v(x_t,t;z) = \frac{d}{dt}\Phi_t(z).
\end{equation}
Taking the difference between the conditional and unconditional trajectories,
\begin{equation}
    v(x_t,t;z_c)-v(x_t,t;z_u)
    =
    \frac{d}{dt}\Big(\Phi_t(z_c)-\Phi_t(z_u)\Big).
\end{equation}
Using Lemma~\ref{lem:linear_response},
\begin{equation}
    \Phi_t(z_c)-\Phi_t(z_u)
    =
    J_t(z_u)\Delta z + R_t.
\end{equation}
Differentiating with respect to $t$ yields
\begin{equation}
    v(x_t,t;z_c)-v(x_t,t;z_u)
    =
    \dot{J}_t(z_u)\Delta z + \tilde{R}_t,
\end{equation}
and $\tilde{R}_t = O(\|\Delta z\|^2)$ under the smoothness assumption.
\end{proof}

\subsection{Trajectory-Level First-Order Approximation}
\label{subsec:trajectory_equivalence}

\begin{theorem}[First-order trajectory approximation under shared-noise coupling]
\label{thm:trajectory_equivalence}
Let $z_c$ and $z_u$ be defined by Eq.~\eqref{eq:prior_defs_appendix}. Under Assumption 1--3, the conditional and unconditional trajectories satisfy
\begin{equation}
    x_t^{(c)} - x_t^{(u)}
    =
    J_t(z_u)\,(z_c-z_u) + O(\|z_c-z_u\|^2),
\end{equation}
and their velocity difference satisfies
\begin{equation}
    v_t^{(c)} - v_t^{(u)}
    =
    \dot{J}_t(z_u)\,(z_c-z_u) + O(\|z_c-z_u\|^2).
\end{equation}

Consequently, both prior-space perturbations and velocity-space differences are driven by the same first-order direction $\Delta z = z_c - z_u$. In particular, for
\begin{equation}
    z_{\mathrm{cfg}} = z_u + w(z_c - z_u),
\end{equation}
we obtain the induced trajectory perturbation
\begin{equation}
    x_t^{\mathrm{cfg}}
    =
    x_t^{(u)} + w\,J_t(z_u)(z_c-z_u) + O(\|z_c-z_u\|^2).
\end{equation}

Moreover, the velocity-space CFG formulation
\begin{equation}
    v_{\mathrm{cfg}}(x_t,t)
    =
    v_t^{(u)}(x_t,t) + \tilde{w}\bigl(v_t^{(c)}(x_t,t)-v_t^{(u)}(x_t,t)\bigr)
\end{equation}
induces a trajectory correction that, under a first-order linearization of the flow map, aligns with the same perturbation direction $J_t(z_u)\Delta z$, up to scaling differences between $w$ and $\tilde{w}$ arising from the nonlinear state-to-velocity mapping.

\end{theorem}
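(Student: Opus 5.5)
The proof is essentially a corollary of Lemma~\ref{lem:linear_response} and Lemma~\ref{lem:velocity_diff}, applied with the shared-noise coupling of Assumption~2. The steps below go in order: the two displayed identities, the $z_{\mathrm{cfg}}$ statement, and finally the velocity-space CFG comparison.

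\textbf{The two displayed identities.} Setting $x_t^{(c)}=\Phi_t(z_c)$ and $x_t^{(u)}=\Phi_t(z_u)$, Lemma~\ref{lem:linear_response} gives
\begin{equation*}
x_t^{(c)}-x_t^{(u)}=J_t(z_u)\Delta z+R_t, \qquad \|R_t\|\le C_t\|\Delta z\|^2 .
\end{equation*}
Assumption~2 is used here. Because $z_c$ and $z_u$ share $\epsilon$, we have
\begin{equation*}
\Delta z=\mu_\phi(y)-\mu_\phi(\emptyset)+\bigl(\sigma_\phi(y)-\sigma_\phi(\emptyset)\bigr)\odot\epsilon,
\end{equation*}
which is small when the two prior parameter sets are close. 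This is the regime of Assumption~3. The velocity statement is exactly Lemma~\ref{lem:velocity_diff}. To justify differentiating the remainder in $t$, I would write $R_t$ in integral (Taylor) form,
\begin{equation*}
R_t=\int_0^1(1-s)\,\nabla_z^2\Phi_t\bigl(z_u+s\Delta z\bigr)[\Delta z,\Delta z]\,ds .
\end{equation*}
I would then assume, as an implicit strengthening of Assumption~1, that $\partial_t\nabla_z^2\Phi_t$ is continuous. It is then bounded on the compact segment, so $\tilde R_t=\partial_tR_t=O(\|\Delta z\|^2)$.

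\textbf{The $z_{\mathrm{cfg}}$ statement.} Apply Lemma~\ref{lem:linear_response} again, now with perturbation $z_{\mathrm{cfg}}-z_u=w\Delta z$. This gives
\begin{equation*}
x_t^{\mathrm{cfg}}=x_t^{(u)}+wJ_t(z_u)\Delta z+O(w^2\|\Delta z\|^2).
\end{equation*}
The remainder is $O(\|\Delta z\|^2)$ provided $w$ stays in a bounded range; I would state this explicitly. Comparing with the first identity shows that both displacements point along the same first-order direction $J_t(z_u)\Delta z$.

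\textbf{The velocity-space CFG comparison (main obstacle).} To make this precise, I would linearize the velocity-space CFG ODE
\begin{equation*}
\dot x=v^{(u)}(x,t)+\tilde w\bigl(v^{(c)}(x,t)-v^{(u)}(x,t)\bigr)
\end{equation*}
around the unconditional trajectory $x_t^{(u)}$. Write $\delta_t=x_t^{\mathrm{cfg,vel}}-x_t^{(u)}$ and $A_t=\nabla_x v^{(u)}(x_t^{(u)},t)$. To first order,
\begin{equation*}
\dot\delta_t=A_t\delta_t+\tilde w\,g_t,
\end{equation*}
where $g_t$ is the velocity gap. By Lemma~\ref{lem:velocity_diff}, $g_t=\dot J_t\Delta z+O(\|\Delta z\|^2)$.

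Since $J_t$ solves the variational equation $\dot J_t=A_tJ_t$ with $J_0=I$, Duhamel's formula gives
\begin{equation*}
\delta_t=\tilde w\int_0^t J_tJ_s^{-1}\dot J_s\Delta z\,ds .
\end{equation*}
The difficulty is that this integral is generally not a scalar multiple of $J_t\Delta z$. So I would only claim alignment up to a direction-dependent scaling. One case is exact: in the near-straight (rectified) regime where $J_s$ commutes along the path, the integrand simplifies and gives $\tilde w\,J_t\log J_t\,\Delta z$, which agrees with $w J_t\Delta z$ under the effective identification $w\leftrightarrow\tilde w\log J_t$.

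I expect this step to be the weakest link. The honest conclusion is qualitative: the correction is driven linearly by $\Delta z$ and lies, to first order, in the same image direction. The mismatch between $w$ and $\tilde w$ should be attributed to the nonlinear state-to-velocity map, as the statement allows.
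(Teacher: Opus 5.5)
\textbf{Claims one to three.} These follow the paper's argument exactly: Lemma~\ref{lem:linear_response} for the state difference, Lemma~\ref{lem:velocity_diff} for the velocity difference, then Lemma~\ref{lem:linear_response} again with perturbation $w\Delta z$. Your integral form of $R_t$ and the bounded-$w$ caveat are legitimate tightenings of what the paper covers with ``under the smoothness assumption.''

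\textbf{The gap: the final claim.} The step ``by Lemma~\ref{lem:velocity_diff}, $g_t=\dot J_t\Delta z+O(\|\Delta z\|^2)$'' confuses two different quantities. Lemma~\ref{lem:velocity_diff} compares velocities along two \emph{different} trajectories, at $x_t^{(c)}$ and at $x_t^{(u)}$. The forcing in your linearized ODE is instead the difference of two vector fields evaluated at the \emph{same} point $x_t^{(u)}$.

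The two facts you combine also cannot both hold. The identity $\dot J_t=A_tJ_t$ requires both trajectories to be integral curves of the single field $v^{(u)}$. That is the actual P-Guide situation, with one $v_\theta$. But then the field inducing $x^{(c)}$ is $v^{(u)}$ itself, so $g_t\equiv 0$. Velocity-space CFG then produces no correction, while prior steering produces $wJ_t\Delta z$.

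Suppose instead the field depends on the initial state, as in the paper's notation $v(x,t;z)$. Then $g_t=B_t\Delta z$ with $B_t=\nabla_z v(x_t^{(u)},t;z_u)$, and $\dot J_t=A_tJ_t+B_t$. Duhamel gives $\delta_t=\tilde w\,(J_t-\Psi_t)\Delta z$, where $\dot\Psi_t=A_t\Psi_t$ and $\Psi_0=I$. This correction vanishes at $t=0$, where the prior-space correction is $w\Delta z$. It differs from $\tilde wJ_t\Delta z$ by exactly the free propagation $\tilde w\Psi_t\Delta z$ of the initial offset.

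\textbf{Your formula does not give alignment either.} Even granting your integral, the commuting-case result $\tilde w\,\log J_t\,J_t\Delta z$ is parallel to $J_t\Delta z$ only when $J_t\Delta z$ is an eigenvector of $\log J_t$. The ``identification'' $w\leftrightarrow\tilde w\log J_t$ is matrix-valued and time-dependent; it equals $0$ at $t=0$. It is not a scalar rescaling between $w$ and $\tilde w$. Your fallback, that the correction ``lies in the same image direction,'' is either vacuous (the invertible $J_t$ has image $\mathbb{R}^d$) or false in general by your own computation.

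\textbf{What this means.} Your analysis shows that the last sentence of the theorem does not follow from Assumptions 1--3. The paper's proof does not close this gap either. After writing the same variational equation $\frac{d}{dt}\delta x_t=\nabla_x v\,\delta x_t+\delta v_t$, it simply asserts that both mechanisms induce the same first-order direction. So you are not missing an idea the paper supplies.

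The defensible statement is the first three displays plus an explicit formula for the velocity-CFG correction. Alignment needs an added hypothesis, for example:
\begin{itemize}
\item $\Delta z$ is an eigenvector of the relevant propagator; or
\item velocity-space CFG is also started from the shifted point $z_u+\tilde w\Delta z$. In both readings this yields exactly $\tilde w J_t\Delta z$ to first order.
\end{itemize}
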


\begin{proof}
By Lemma~\ref{lem:linear_response},
\begin{equation}
    x_t^{(c)}-x_t^{(u)}
    =
    \Phi_t(z_c)-\Phi_t(z_u)
    =
    J_t(z_u)(z_c-z_u) + O(\|z_c-z_u\|^2).
\end{equation}

Similarly, by Lemma~\ref{lem:velocity_diff},
\begin{equation}
    v_t^{(c)}-v_t^{(u)}
    =
    \dot{J}_t(z_u)(z_c-z_u) + O(\|z_c-z_u\|^2).
\end{equation}

The key observation is that both state and velocity differences are governed by the same latent-space perturbation direction $\Delta z$, although they are expressed in different coordinate systems (state space vs. velocity field space).

For the guided prior
\begin{equation}
    z_{\mathrm{cfg}} = z_u + w\Delta z,
\end{equation}
a first-order Taylor expansion yields
\begin{equation}
    x_t^{\mathrm{cfg}}
    =
    \Phi_t(z_u) + w J_t(z_u)\Delta z + O(\|\Delta z\|^2).
\end{equation}

For velocity-space CFG, note that the correction
\begin{equation}
    \delta v_t = \tilde{w}(v_t^{(c)} - v_t^{(u)})
\end{equation}
acts as an Eulerian perturbation of the vector field. The induced trajectory perturbation is governed by the variational dynamics
\begin{equation}
    \frac{d}{dt}\delta x_t = \nabla_x v(x_t,t)\,\delta x_t + \delta v_t.
\end{equation}

Under a first-order linearization of the flow map and the shared-noise coupling assumption, prior-space guidance and velocity-space CFG induce the same first-order perturbation direction in the linearized trajectory space spanned by $J_t(z_u)\Delta z$.

This implies that CFG can be equivalently implemented in the prior space, avoiding the need for dual forward passes required by velocity-space guidance.
\end{proof}

\subsection{Connection to the Conditional Score}
\label{subsec:score_connection}

To make the link to score-space CFG explicit, we connect the prior shift to the conditional score under a Gaussian perturbation interpretation.

\begin{proposition}[Prior shift and conditional score]
\label{prop:prior_score}
Assume a Gaussian perturbation model at noise level $\sigma_t$, under which the posterior mean admits a score-based representation. Then the difference between conditional and unconditional estimators satisfies
\begin{equation}
    \mathbb{E}[x_0 \mid x_t, y] - \mathbb{E}[x_0 \mid x_t]
    =
    \sigma_t^2 \Bigl(\nabla_{x_t}\log p_t(x_t \mid y) - \nabla_{x_t}\log p_t(x_t)\Bigr)
    =
    \sigma_t^2 \nabla_{x_t}\log p(y \mid x_t).
\end{equation}
\end{proposition}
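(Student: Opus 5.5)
This is Tweedie's formula, applied once to the conditional and once to the marginal perturbed distribution, with the two identities then subtracted. I will make the Gaussian perturbation model concrete as $x_t = x_0 + \sigma_t \eta$ with $\eta \sim \mathcal{N}(0,\mathbf{I})$ independent of $(x_0, y)$. Then $p_t(x_t \mid x_0) = \mathcal{N}(x_t; x_0, \sigma_t^2 \mathbf{I})$, and the same kernel applies conditionally on $y$ because the noise does not depend on $y$. For the rectified-flow path $x_t = (1-t)z + t x_1$ one can reduce to this form by rescaling, $x_t/t$ being data plus Gaussian noise of level $(1-t)/t$. I will remark on this but state the proposition in the canonical variance-exploding form, which is what "Gaussian perturbation model at noise level $\sigma_t$" presumably means.

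\textbf{Tweedie for the marginal.} Write $p_t(x_t) = \int p(x_0)\, \mathcal{N}(x_t; x_0, \sigma_t^2\mathbf{I})\, dx_0$. Differentiating under the integral, using $\nabla_{x_t}\mathcal{N}(x_t;x_0,\sigma_t^2\mathbf{I}) = -\sigma_t^{-2}(x_t - x_0)\,\mathcal{N}(x_t;x_0,\sigma_t^2\mathbf{I})$, and dividing by $p_t(x_t)$ gives
\[
\nabla_{x_t}\log p_t(x_t) = \sigma_t^{-2}\bigl(\mathbb{E}[x_0\mid x_t] - x_t\bigr).
\]
Differentiation under the integral is justified by dominated convergence, since the Gaussian kernel and its gradient are bounded by integrable envelopes whenever $x_0$ has a finite first moment. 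Rearranging,
\[
\mathbb{E}[x_0\mid x_t] = x_t + \sigma_t^2 \nabla_{x_t}\log p_t(x_t).
\]

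\textbf{Tweedie for the conditional, then subtract.} Repeating the computation with $p(x_0\mid y)$ in place of $p(x_0)$ gives
\[
\mathbb{E}[x_0\mid x_t,y] = x_t + \sigma_t^2 \nabla_{x_t}\log p_t(x_t\mid y).
\]
This step uses that, given $x_0$, the noisy state $x_t$ is independent of $y$, so that $p_t(x_t\mid x_0,y)$ is the same Gaussian kernel. Subtracting the two identities cancels the $x_t$ terms and yields the first equality of Proposition~\ref{prop:prior_score}. The second equality is Eq.~\eqref{eq:score_bayes_appendix}, Bayes' rule at time $t$: $\log p_t(x_t\mid y) - \log p_t(x_t) = \log p(y\mid x_t) - \log p(y)$, and the last term has zero gradient in $x_t$.

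\textbf{Main obstacle.} The algebra is routine; the care lies in the hypotheses. One needs $p_t(x_t\mid y) > 0$ and $p(y) > 0$, so the logarithms and the division are legitimate. The Gaussian kernel guarantees positivity of $p_t$ for $\sigma_t > 0$, and $p(y) > 0$ is assumed. One also needs the conditional-independence structure $x_t \perp y \mid x_0$, which I will state explicitly as part of the "Gaussian perturbation model" assumption.

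I will then note how this links back to Assumption~4 and the prior shift. Reading $\mu_\phi(y)$ as the $\sigma_t \to \infty$ or $t = 0$ end of these posterior means is only heuristic, so I will present it as an interpretation rather than as part of the proof.
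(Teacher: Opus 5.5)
Your proposal is correct and follows the same route as the paper. Both apply Tweedie's formula to the marginal and the conditional posterior means, subtract to cancel $x_t$, and invoke Bayes' rule for the second equality. The only difference is that the paper asserts the two Tweedie identities, while you derive them and state explicitly the hypothesis $x_t \perp y \mid x_0$ that the paper's ``similarly in the conditional case'' uses without saying so.
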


\begin{proof}
Under a Gaussian perturbation model at noise level $\sigma_t$, the posterior mean can be expressed in score form as
\begin{equation}
    \mathbb{E}[x_0 \mid x_t]
    =
    x_t + \sigma_t^2 \nabla_{x_t}\log p_t(x_t),
\end{equation}
and similarly in the conditional case,
\begin{equation}
    \mathbb{E}[x_0 \mid x_t, y]
    =
    x_t + \sigma_t^2 \nabla_{x_t}\log p_t(x_t \mid y).
\end{equation}

Subtracting the two expressions yields
\begin{equation}
    \mathbb{E}[x_0 \mid x_t, y] - \mathbb{E}[x_0 \mid x_t]
    =
    \sigma_t^2\left(\nabla_{x_t}\log p_t(x_t\mid y)-\nabla_{x_t}\log p_t(x_t)\right).
\end{equation}

Using Bayes' rule,
\begin{equation}
    \nabla_{x_t}\log p_t(x_t \mid y)-\nabla_{x_t}\log p_t(x_t)
    =
    \nabla_{x_t}\log p(y \mid x_t),
\end{equation}
which completes the derivation.
\end{proof}

Combining Proposition~\ref{prop:prior_score} with Theorem~\ref{thm:trajectory_equivalence}, we interpret the prior-space shift induced by CFG as a first-order approximation to the conditional score direction. This provides a unified perspective linking prior-space guidance and score-based guidance under Gaussian perturbation, consistent with the trajectory-level equivalence established in Section~\ref{subsec:trajectory_equivalence}.

\subsection{Adaptive Loss Attenuation in Training}
\label{subsec:loss_attenuation}

We parameterize the prior module to output both mean and variance:
\begin{equation}
    \mu_\phi(y), \qquad \sigma_\phi^2(y) > 0.
\end{equation}
Assuming a Gaussian likelihood, the training objective is
\begin{equation}
    \label{eq:nll_loss_appendix}
    \mathcal{L}
    =
    \mathbb{E}_{x,y}\left[
    \frac{\|x-\mu_\phi(y)\|^2}{2\sigma_\phi^2(y)}
    +
    \frac{1}{2}\log \sigma_\phi^2(y)
    \right].
\end{equation}

\begin{theorem}[Adaptive loss attenuation]
\label{thm:loss_attenuation}
Under the NLL objective in Eq.~\eqref{eq:nll_loss_appendix}, the gradient of the mean prediction is attenuated by the inverse variance:
\begin{equation}
    \nabla_{\mu_\phi}\mathcal{L}
    =
    -\frac{x-\mu_\phi(y)}{\sigma_\phi^2(y)}.
\end{equation}
Therefore, large predictive uncertainty downweights the contribution of hard or noisy examples.
\end{theorem}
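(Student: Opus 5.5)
The proof is a direct differentiation of the per-sample loss in Eq.~\eqref{eq:nll_loss_appendix} with respect to the mean output $\mu_\phi(y)$. We treat $\mu = \mu_\phi(y)$ and $s = \sigma_\phi^2(y) > 0$ as free output variables of the prior module, fixing $(x,y)$. Writing
\begin{equation*}
\ell(\mu, s) = \frac{\|x-\mu\|^2}{2s} + \frac{1}{2}\log s,
\end{equation*}
the log term does not depend on $\mu$. Since $\nabla_\mu \|x-\mu\|^2 = -2(x-\mu)$, we get $\nabla_\mu \ell = -(x-\mu)/s$, which is the claimed expression. If $\sigma_\phi^2(y)$ is vector-valued (elementwise variance), the same computation holds coordinatewise with division interpreted elementwise. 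The gradient of the full expectation $\mathcal{L}$ then follows by exchanging gradient and expectation; this is justified because the integrand is smooth in $\mu$ and, for $s$ bounded away from zero and $x$ with finite second moment, is dominated.

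To reach the network parameters $\phi$, apply the chain rule:
\begin{equation*}
\nabla_\phi \ell = \bigl(\partial_\phi \mu_\phi(y)\bigr)^{\top}\nabla_\mu \ell + \bigl(\partial_\phi \sigma^2_\phi(y)\bigr)^{\top}\partial_s \ell .
\end{equation*}
The mean pathway therefore carries the residual $x-\mu_\phi(y)$ scaled by $1/\sigma_\phi^2(y)$. Compared with the homoscedastic squared-error gradient $-(x-\mu)$, each example's contribution to updating the mean is multiplied by the factor $1/\sigma_\phi^2(y)$.

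For the interpretive claim, we also compute
\begin{equation*}
\partial_s \ell = -\frac{\|x-\mu\|^2}{2s^2} + \frac{1}{2s}.
\end{equation*}
Setting this to zero at stationarity gives $s = \|x-\mu\|^2$ per sample, or $\sigma^2$ equal to the conditional residual variance in expectation. Hence inputs with large residuals, i.e.\ hard or noisy examples, are driven toward large $\sigma_\phi^2$. This in turn shrinks their weight $1/\sigma_\phi^2$ in the mean gradient, which is precisely the downweighting statement.

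The only delicate step is this last, interpretive one. It requires that the variance head actually adapt, so that attenuation is a statement about the coupled optimum rather than a fixed rescaling. It also requires positivity of $\sigma_\phi^2$ to avoid degenerate blow-up. We would handle both by assuming a positive parameterization such as softplus or exp, and by stating the attenuation claim at the variance-stationary point.
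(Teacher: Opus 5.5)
Your core computation (isolating the only $\mu$-dependent term $\|x-\mu\|^2/(2\sigma_\phi^2(y))$ and differentiating to get $-(x-\mu_\phi(y))/\sigma_\phi^2(y)$) is exactly the paper's proof, which then reads the $1/\sigma_\phi^2(y)$ factor directly as the attenuation. Your chain-rule and variance-stationarity additions go beyond what the paper argues and are sound, with one correction: since $\sigma_\phi$ depends only on $y$, the per-sample condition $\sigma_\phi^2(y)=\|x-\mu_\phi(y)\|^2$ cannot hold in general; the relevant optimum is the conditional one $\sigma_\phi^2(y)=\mathbb{E}[\|x-\mu_\phi(y)\|^2\mid y]$, which you also state, so the downweighting acts on noisy conditions $y$ rather than on individual examples.
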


\begin{proof}
The terms depending on $\mu_\phi(y)$ are
\begin{equation}
    \frac{1}{2\sigma_\phi^2(y)}\|x-\mu_\phi(y)\|^2.
\end{equation}
Differentiating with respect to $\mu_\phi(y)$ gives
\begin{equation}
    \nabla_{\mu_\phi}\mathcal{L}
    =
    \frac{1}{2\sigma_\phi^2(y)} \cdot 2(\mu_\phi(y)-x)
    =
    -\frac{x-\mu_\phi(y)}{\sigma_\phi^2(y)}.
\end{equation}
Hence the gradient magnitude is inversely proportional to $\sigma_\phi^2(y)$, which realizes adaptive attenuation.
\end{proof}

\subsection{Stability of Velocity-Space Control}
\label{subsec:stability_analysis}

A practical reason for operating in velocity space is that the ODE is controlled directly at the level that determines the trajectory evolution. Consider a guided vector field
\begin{equation}
    v_{\mathrm{cfg}}(x,t)
    =
    v_u(x,t) + w\bigl(v_c(x,t)-v_u(x,t)\bigr).
\end{equation}
If $v_u$ and $v_c$ are Lipschitz in $x$, then $v_{\mathrm{cfg}}$ is also Lipschitz, and the ODE remains well-posed. Moreover, by Gr\"onwall's inequality, the trajectory deviation is bounded by
\begin{equation}
    \|x_t^{\mathrm{cfg}} - x_t^{(u)}\|
    \le
    e^{Lt}\int_0^t \|w(v_c(x_s,s)-v_u(x_s,s))\|\,ds,
\end{equation}
where $L$ is a Lipschitz constant. This shows that velocity-space guidance provides a continuous and stable control signal over the entire integration path.

By contrast, directly mixing parameterizations such as means and variances changes the initial distribution in a way that need not preserve the shared-noise coupling structure. In particular, parameter interpolation does not in general correspond to a controlled perturbation of the vector field and therefore is not trajectory-equivalent to CFG in the ODE dynamics.

\subsection{Final Unified Statement}
\label{subsec:final_statement}

We summarize the above results in the following compact formula. Under the denoising interpretation of the prior module and the local linear response of the flow map, the guided prior
\begin{equation}
    \label{eq:p_guide_core_final_appendix}
    z_{\mathrm{cfg}}
    =
    \mu_\phi(\emptyset)
    +
    w\bigl(\mu_\phi(y)-\mu_\phi(\emptyset)\bigr)
    +
    \Bigl[
    \sigma_\phi(\emptyset)
    +
    w\bigl(\sigma_\phi(y)-\sigma_\phi(\emptyset)\bigr)
    \Bigr]\odot \epsilon
\end{equation}
induces, to first order, the same trajectory perturbation as the corresponding velocity-space CFG. Therefore, a single guidance operation at initialization can replace repeated conditional/unconditional evaluations along the full trajectory, while preserving the leading-order guidance direction in the dynamical system.

\qed

\section{Closed-Form CFG in Distribution Space and Failure Analysis}
\label{sec:appendix_distribution_cfg}

\subsection{Closed-Form CFG in Distribution Space}

Consider the unconditional and conditional initial distributions:
\begin{equation}
	p_u(x) = \mathcal{N}(\mu_u, \sigma_u^2 \mathbf{I}), \quad 
	p_c(x) = \mathcal{N}(\mu_c, \sigma_c^2 \mathbf{I}).
\end{equation}

A natural distribution-level formulation of CFG is:
\begin{equation}
	p_{\mathrm{cfg}}(x) \propto p_u(x)^{1-w} p_c(x)^w,
\end{equation}
which corresponds to a linear interpolation in log-density space:
\begin{equation}
	\log p_{\mathrm{cfg}}(x) = (1-w)\log p_u(x) + w\log p_c(x) + C.
\end{equation}

Substituting the Gaussian log-density
\begin{equation}
	\log p(x) = -\frac{1}{2\sigma^2}\|x - \mu\|^2 - \frac{1}{2}\log(2\pi\sigma^2),
\end{equation}
and ignoring constants independent of $x$, we obtain:
\begin{equation}
	\log p_{\mathrm{cfg}}(x)
	= -\frac{1-w}{2\sigma_u^2}\|x - \mu_u\|^2 
	- \frac{w}{2\sigma_c^2}\|x - \mu_c\|^2 + C.
\end{equation}

We now expand both quadratic terms explicitly:
\begin{align}
	\|x - \mu_u\|^2 &= \|x\|^2 - 2\mu_u^\top x + \|\mu_u\|^2, \\
	\|x - \mu_c\|^2 &= \|x\|^2 - 2\mu_c^\top x + \|\mu_c\|^2.
\end{align}

Substituting back:
\begin{align}
	\log p_{\mathrm{cfg}}(x)
	&= -\frac{1-w}{2\sigma_u^2}(\|x\|^2 - 2\mu_u^\top x + \|\mu_u\|^2) \nonumber \\
	&\quad - \frac{w}{2\sigma_c^2}(\|x\|^2 - 2\mu_c^\top x + \|\mu_c\|^2) + C.
\end{align}

Grouping terms by powers of $x$:
\begin{align}
	\log p_{\mathrm{cfg}}(x)
	&= -\frac{1}{2}\left( \frac{1-w}{\sigma_u^2} + \frac{w}{\sigma_c^2} \right)\|x\|^2 \nonumber \\
	&\quad + \left( \frac{1-w}{\sigma_u^2}\mu_u + \frac{w}{\sigma_c^2}\mu_c \right)^\top x + C'.
\end{align}

Matching this with the canonical Gaussian form
\begin{equation}
	-\frac{1}{2\sigma_{\mathrm{cfg}}^2}\|x - \mu_{\mathrm{cfg}}\|^2,
\end{equation}
we obtain:
\begin{equation}
\label{eq:p-sigma}
	\frac{1}{\sigma_{\mathrm{cfg}}^2} = \frac{1-w}{\sigma_u^2} + \frac{w}{\sigma_c^2},
\end{equation}
\begin{equation}
\label{eq:p-mu}
	\mu_{\mathrm{cfg}} = \sigma_{\mathrm{cfg}}^2 \left(
	\frac{1-w}{\sigma_u^2}\mu_u + \frac{w}{\sigma_c^2}\mu_c
	\right).
\end{equation}

Thus, CFG in distribution space results in a Gaussian whose mean and variance are jointly modified.

In the special case where $\sigma_u = \sigma_c = \sigma$, the above expressions simplify to:
\begin{equation}
    \frac{1}{\sigma_{\mathrm{cfg}}^2} = \frac{1}{\sigma^2}, \quad
    \mu_{\mathrm{cfg}} = (1-w)\mu_u + w\mu_c.
\end{equation}
This recovers the standard linear interpolation in mean space, consistent with the distribution-level CFG formulation in Eq.~\ref{eq:p-sigma}--\ref{eq:p-mu}.

Moreover, this form matches the marginal distribution induced by P-Guide in Eq.~\ref{eq:p_guide_core_final_appendix}, where the initial latent distribution is constructed as a convex combination of class-dependent components.

Therefore, the claim that distribution-level CFG is inherently inconsistent with ODE-based dynamics only holds in the general heteroscedastic case, and does not contradict the theoretical validity of the proposed P-Guide formulation under the homoscedastic setting.

\subsection{Why Distribution-Level CFG Is Problematic}

The key limitation of distribution-level interpolation is that it modifies the \textit{shape of the density}, rather than directly influencing the \textit{direction of trajectory evolution} induced by the underlying flow dynamics.

Sampling directly from $p_{\mathrm{cfg}}(x)$ effectively:
\begin{itemize}
	\item defines a new initial distribution,
	\item while keeping the dynamics $v(x_t, t \mid z)$ unchanged.
\end{itemize}

This leads to a mismatch between the sampling initialization and the velocity field. In particular, the shared-noise coupling assumption
\begin{equation}
	(z_c, z_u) \ \text{are no longer generated from the same base noise } \epsilon,
\end{equation}
which is a key requirement in Section~\ref{sec:appendix_equivalence}, may no longer hold.

As a consequence, the trajectory-level interpretation of CFG becomes less accurate in the general heteroscedastic setting, as the guidance signal is partially decoupled from the underlying dynamics.

\subsection{Empirical Failure on MNIST}

To empirically verify the mismatch identified in Section~\ref{sec:appendix_distribution_cfg}, we conduct a controlled experiment on MNIST by directly sampling from the distribution-level CFG formulation $p_{\mathrm{cfg}}(x)$ instead of applying trajectory-level guidance.

Figure~\ref{fig:mnist_cfg_compare} presents a qualitative comparison between trajectory-level P-Guide and distribution-level CFG under the same initial noise and guidance scale ($w=1.5$). 

We observe that, although the distribution-level formulation modifies the marginal density, it fails to produce meaningful improvements in sample quality. In particular, the generated digits exhibit noticeable distortions and structural inconsistencies, while P-Guide preserves clear and semantically correct digit shapes.

This behavior remains consistent across different digit classes, indicating that increasing the guidance scale does not lead to systematic improvements under the distribution-level formulation.

\begin{figure}[H]
    \centering
    \includegraphics[width=\linewidth]{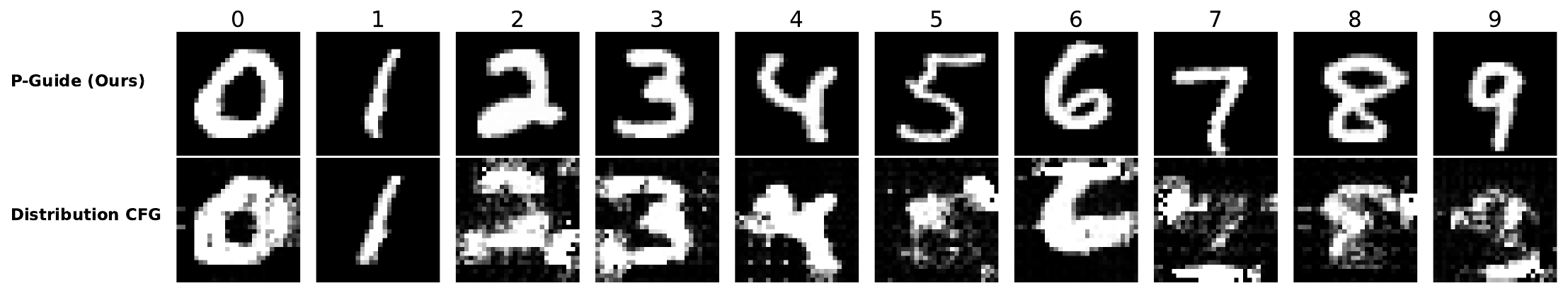}
    \caption{Comparison between trajectory-level guidance (P-Guide) and distribution-level CFG on MNIST under guidance scale $w=1.5$. The distribution-level formulation leads to distorted and unstable samples, while P-Guide preserves semantic structure.}
    \label{fig:mnist_cfg_compare}
\end{figure}

This failure is consistent with our theoretical analysis in Section~\ref{sec:appendix_equivalence}. Specifically, distribution-level CFG violates the shared-noise coupling assumption, which is essential for maintaining trajectory-level coherence between conditional and unconditional paths.

As a result, the geometric interpretation of CFG as a trajectory-level perturbation breaks down, and the method degenerates into a static density reshaping procedure that is not aligned with the underlying ODE dynamics.

These results further support the conclusion that effective guidance must operate at the trajectory level rather than directly modifying probability distributions.

\section{Additional Ablation Studies}
\label{appendix:ablations}

\subsection{Effect of Initialization: Pretrained vs. From-Scratch Training}
\label{appendix:ablation_pretrain}

We further investigate the impact of initialization on training efficiency for the Stage-2 flow model in P-Guide. Specifically, we compare two settings: (1) training the model from scratch, and (2) initializing from an existing pretrained flow model~\cite{dao2023flow}. In both cases, the model is optimized for 400K steps under identical training configurations.

All experiments are conducted using the DiT-B/2 backbone with a single inference pass. Quantitative results are summarized in Table~\ref{tab:pretrain_vs_scratch}.

\begin{table}[t]
\centering
\caption{Comparison between pretrained initialization and training from scratch for Stage-2 P-Guide. Both models are trained for 400K steps.}
\label{tab:pretrain_vs_scratch}
\begin{tabular}{lcc}
\toprule
Initialization & FID $\downarrow$ & GFLOPs $\downarrow$ \\
\midrule
From Scratch (400K) & 49.03 & 2301.60 \\
Pretrained + 400K   & 33.68 & 2301.60 \\
\bottomrule
\end{tabular}
\end{table}

We observe that P-Guide can be successfully trained from scratch, achieving reasonable performance under a 400K-step training budget. However, initializing from a pretrained flow model consistently leads to better sample quality under the same number of optimization steps, indicating faster convergence.

This suggests that while P-Guide does not rely on pretrained models, it can effectively leverage existing flow models as a strong initialization in practice. Such initialization provides a convenient way to accelerate training of the Stage-2 model without altering the inference procedure.

Overall, these results demonstrate that P-Guide is compatible with both from-scratch and pretrained settings, while offering improved training efficiency when combined with existing flow models.

\subsection{Influence of Prior Steering Module Capacity}
\label{appendix:ablation_param_scale}

To investigate whether increasing the capacity of the prior steering module $F_{\phi}(y)$ leads to further improvements, we compare the default lightweight design with an enlarged variant (PG-Large, 12.5 MB). The architecture of the default module (1.247 MB) is illustrated in Fig.~\ref{fig:pg_resnet}, which adopts a compact conditional ResNet with FiLM modulation.

\begin{figure}[t]
    \centering
    \includegraphics[width=\linewidth]{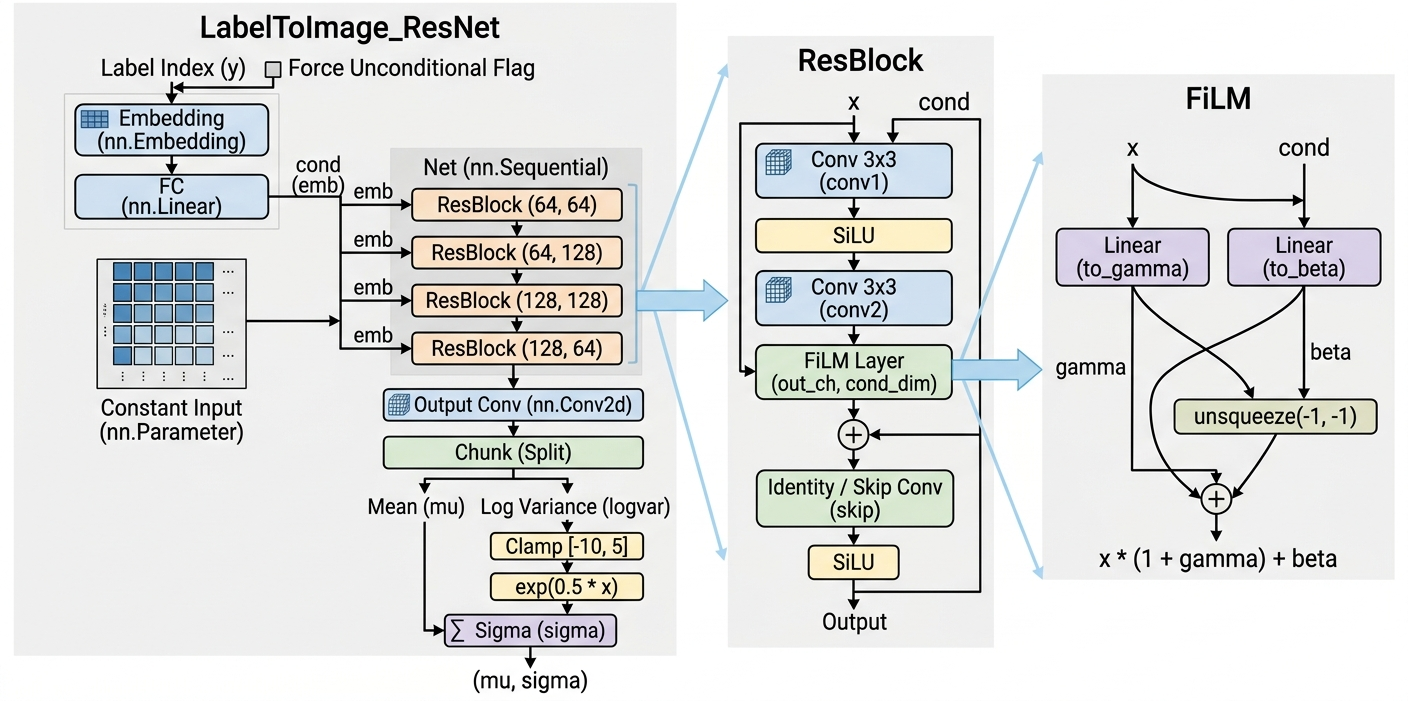}
    \caption{Architecture of the default P-Guide prior steering module (1.247 MB). The model consists of a lightweight conditional ResNet with FiLM modulation.}
    \label{fig:pg_resnet}
\end{figure}

The larger variant scales up the model capacity by increasing both the number of residual blocks and the hidden channel width, while keeping the overall design identical.

Quantitative results are reported in Table~\ref{tab:pg_capacity}. Despite a $10\times$ increase in parameter size, the performance remains comparable (FID changes from 33.68 to 34.34), while the computational cost remains nearly unchanged.

\begin{table}[t]
\centering
\caption{Effect of prior module capacity on ImageNet-1k ($256 \times 256$). All results use DiT-B/2 backbone with a single inference pass.}
\label{tab:pg_capacity}
\begin{tabular}{lccc}
\toprule
PG Module & Size (MB) & FID $\downarrow$ & GFLOPs $\downarrow$ \\
\midrule
P-Guide (Default) & 1.247 & 33.68 & 2301.60 \\
P-Guide (Large)   & 12.5  & 34.34 & 2303.65 \\
\bottomrule
\end{tabular}
\end{table}

This observation indicates that the prior shift $z_c - z_u$ mainly acts as a low-dimensional directional signal in trajectory space, rather than requiring a high-capacity function approximator. In other words, the role of $F_{\phi}(y)$ is not to model complex distributions, but to provide a coarse yet effective steering direction.

Therefore, a lightweight conditional network is sufficient to capture the essential structure needed for single-pass guidance, leading to a favorable trade-off between performance and efficiency.
\subsection{Effect of Applying Guidance to Variance}
\label{appendix:ablation_sigma_cfg}

We investigate whether classifier-free guidance (CFG) needs to be applied to both the mean and variance of the conditional prior in P-Guide. The default formulation applies guidance to both components:
\begin{equation}
    z_{cfg} = \mu_{\phi}(\emptyset) + w (\mu_{\phi}(y) - \mu_{\phi}(\emptyset)) 
    + \big[\sigma_{\phi}(\emptyset) + w (\sigma_{\phi}(y) - \sigma_{\phi}(\emptyset))\big] \odot \epsilon.
\end{equation}

We compare this with a simplified variant where CFG is applied only to the mean, while the variance remains conditional:
\begin{equation}
    z_{cfg} = \mu_{\phi}(\emptyset) + w (\mu_{\phi}(y) - \mu_{\phi}(\emptyset)) 
    + \sigma_{\phi}(y) \odot \epsilon.
\end{equation}

All experiments are conducted using both U-Net and DiT-B/2 backbones with a single inference pass. Quantitative results are reported in Table~\ref{tab:sigma_cfg}.

\begin{table}[t]
\centering
\caption{Comparison of applying CFG to mean only vs. both mean and variance. Results are reported in FID ($\downarrow$) across different guidance scales $w$.}
\label{tab:sigma_cfg}
\begin{tabular}{ccccc}
\toprule
& \multicolumn{2}{c}{U-Net} & \multicolumn{2}{c}{DiT-B/2} \\
\cmidrule(lr){2-3} \cmidrule(lr){4-5}
$w$ & Mean-only & Mean+Var & Mean-only & Mean+Var \\
\midrule
1.1 & 22.33 & 22.09 & 26.80 & 26.71 \\
1.2 & 23.01 & 23.14 & 25.07 & 24.75 \\
1.5 & 44.53 & 47.62 & 38.43 & 38.31 \\
\bottomrule
\end{tabular}
\end{table}

We observe that applying CFG to the mean alone already achieves competitive performance, while incorporating variance guidance provides only marginal improvements in most cases. This suggests that the primary effect of guidance is captured by the mean shift in the prior space.

These findings further support our interpretation that P-Guide operates mainly through a directional control mechanism in the latent space, where the mean term dominates trajectory steering, and variance plays a secondary role.

\subsection{Compatibility with Standard CFG}
\label{appendix:pg_cfg_compatibility}

To further examine whether P-Guide is compatible with standard classifier-free guidance (CFG), we conduct an additional experiment where both mechanisms are applied jointly. Specifically, we first perform prior steering using P-Guide, and then apply standard CFG during ODE integration.

We evaluate combinations of guidance scales for P-Guide ($w_{\text{PG}}$) and standard CFG ($w_{\text{CFG}}$). 
Specifically, $w_{\text{PG}} \in \{1.0, 1.1, 1.2\}$ and $w_{\text{CFG}} \in \{1.0, 1.1, 1.2, 1.5, 2.0\}$, resulting in 15 configurations in total. 
Results on ImageNet-1k ($256 \times 256$) with a U-Net backbone are reported in Table~\ref{tab:pg_cfg_joint}.

\begin{table}[t]
\centering
\caption{Joint evaluation of P-Guide and standard CFG on ImageNet-1k ($256 \times 256$). Rows correspond to P-Guide scale $w_{\text{PG}}$, columns correspond to CFG scale $w_{\text{CFG}}$. All results use a U-Net backbone.}
\label{tab:pg_cfg_joint}
\begin{tabular}{c|ccccc}
\toprule
$w_{\text{PG}} \backslash w_{\text{CFG}}$ 
& 1.0 & 1.1 & 1.2 & 1.5& 2.0 \\
\midrule
1.0 & 27.78 & 23.37 & 20.01 & 15.44&25.29 \\
1.1 & 22.33 & 18.74 & 16.16 & 13.26&21.00 \\
1.2 &  23.01 & 20.43 & 18.77 & 18.54& 29.35\\
\bottomrule
\end{tabular}
\end{table}

We observe that combining P-Guide with standard CFG does not lead to instability or degradation beyond the expected behavior of large guidance scales. In particular, moderate combinations (e.g., $w_{\text{PG}} \in [1.0,1.2]$, $w_{\text{CFG}} \in [1.2,2.0]$) remain stable and achieve competitive FID values.

These results suggest that P-Guide operates as a complementary mechanism to standard CFG, rather than interfering with its effect. This further supports our interpretation that prior-space steering and trajectory-level velocity extrapolation act on different stages of the generative process and can be composed without violating the underlying dynamics.

\section{Additional Qualitative Results}
\label{appendix:qualitative}

In this section, we provide additional qualitative results to complement the quantitative evaluations in the main text and Appendix~\ref{appendix:ablations}. These visualizations aim to further illustrate the behavior of P-Guide under different settings, including guidance strength, prior parameterization, and model architectures.

\begin{figure*}[h]
  \centering
  \begin{minipage}{0.48\textwidth}
    \centering
    \includegraphics[width=\linewidth]{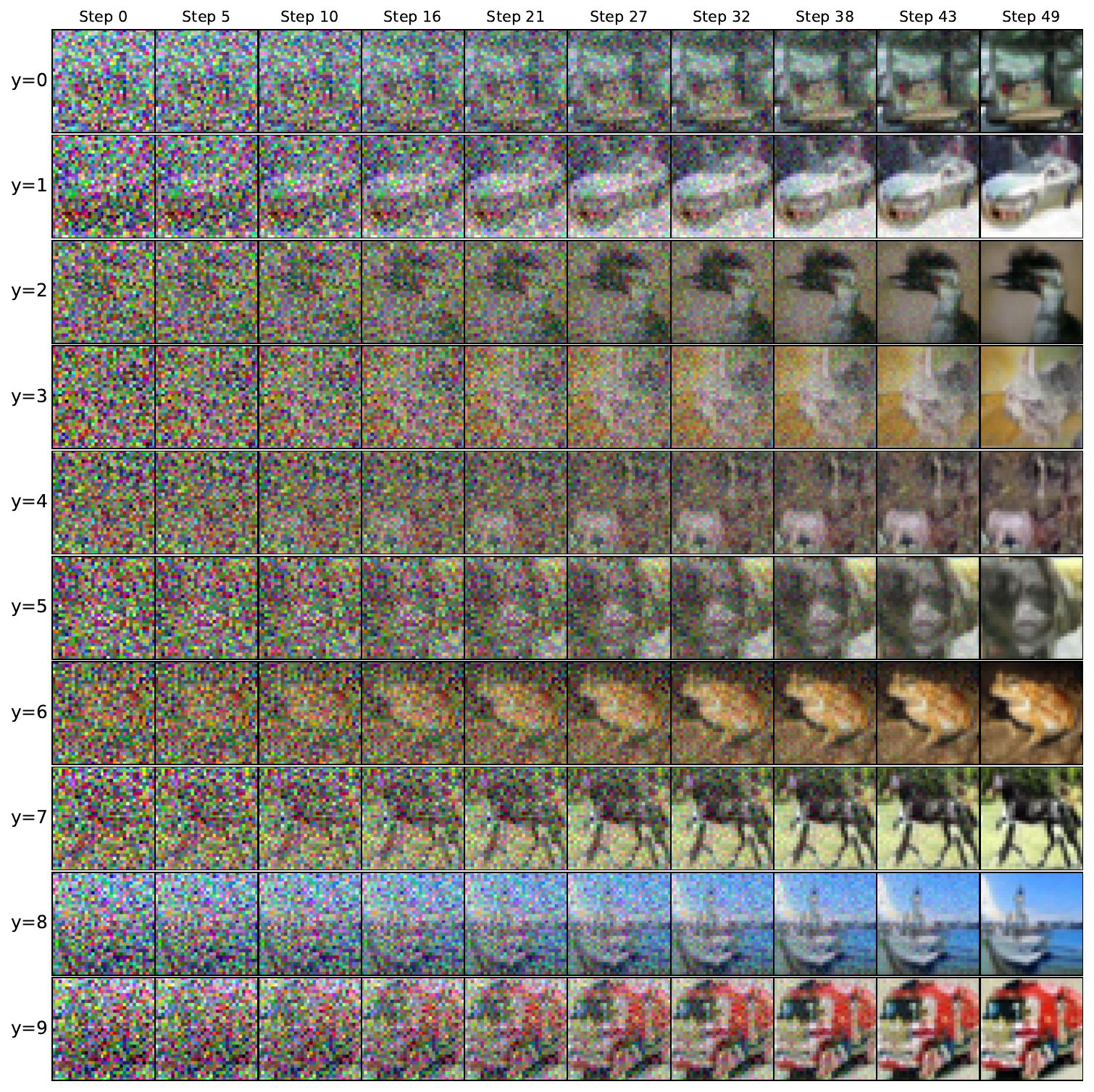}
    \small (a) Trajectory from $z_{cfg}$ to $x_1$ with $w=1.0$
  \end{minipage}
  \hfill
  \begin{minipage}{0.48\textwidth}
    \centering
    \includegraphics[width=\linewidth]{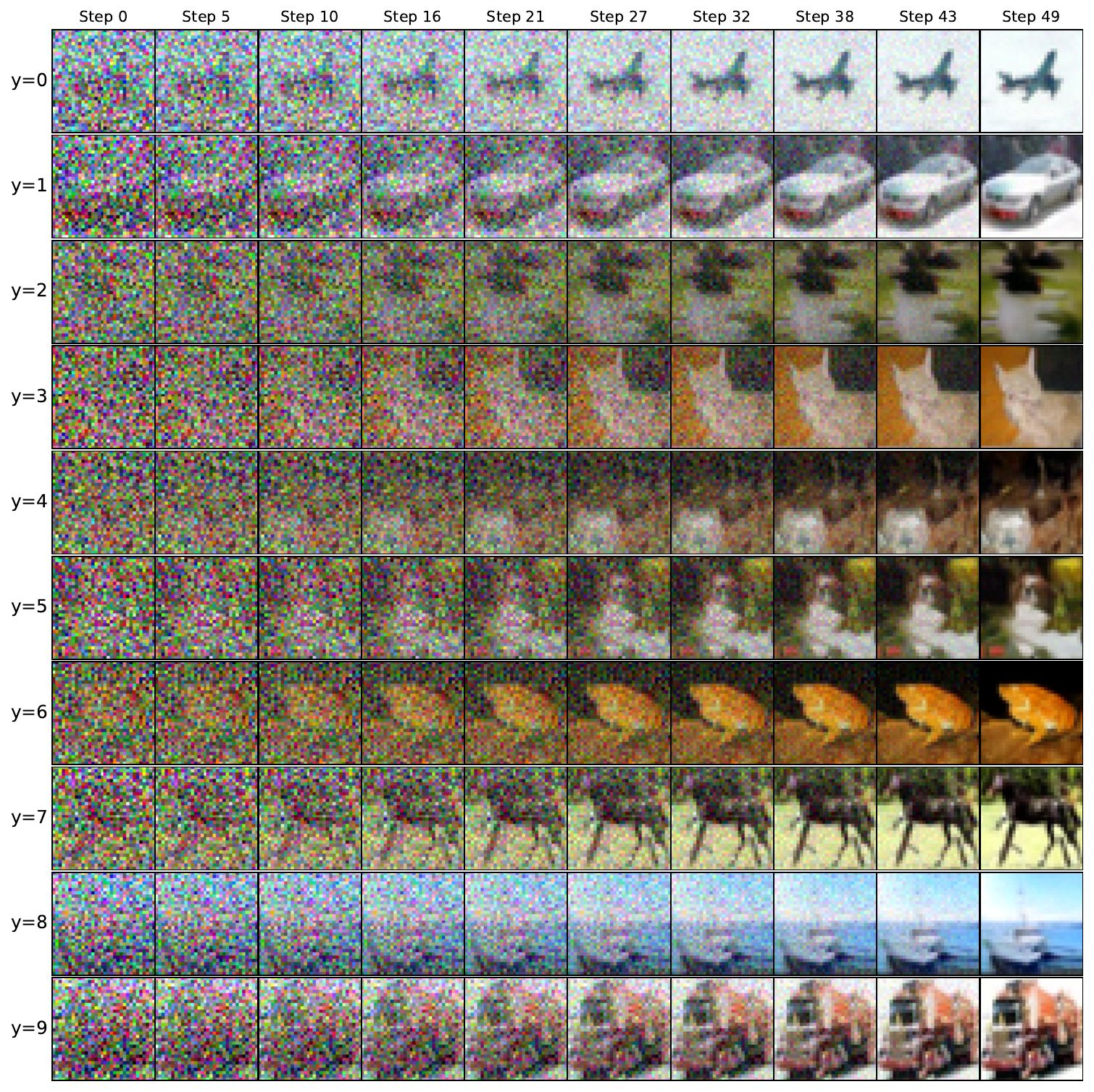}
    \small (b) Trajectory from $z_{cfg}$ to $x_1$ with $w=1.2$
  \end{minipage}
    \caption{\textbf{Visual comparison of P-Guide generation trajectories on CIFAR-10.} 
    Each row shows the evolution of a sample from its initial latent state ($t=0$) to the final generated image ($t=1$). 
    Comparing $w=1.0$ and $w=1.2$, increasing the guidance scale leads to more semantically coherent structures emerging from the earliest stages of generation. 
    This supports our hypothesis that modulating the initial latent state is sufficient to influence the entire sampling trajectory, without requiring iterative velocity-field extrapolation.}
    \label{fig:CIFAR10_trajs}
\end{figure*}

\begin{figure*}[h]
  \centering
  \begin{minipage}{0.48\textwidth}
    \centering
    \includegraphics[width=\linewidth]{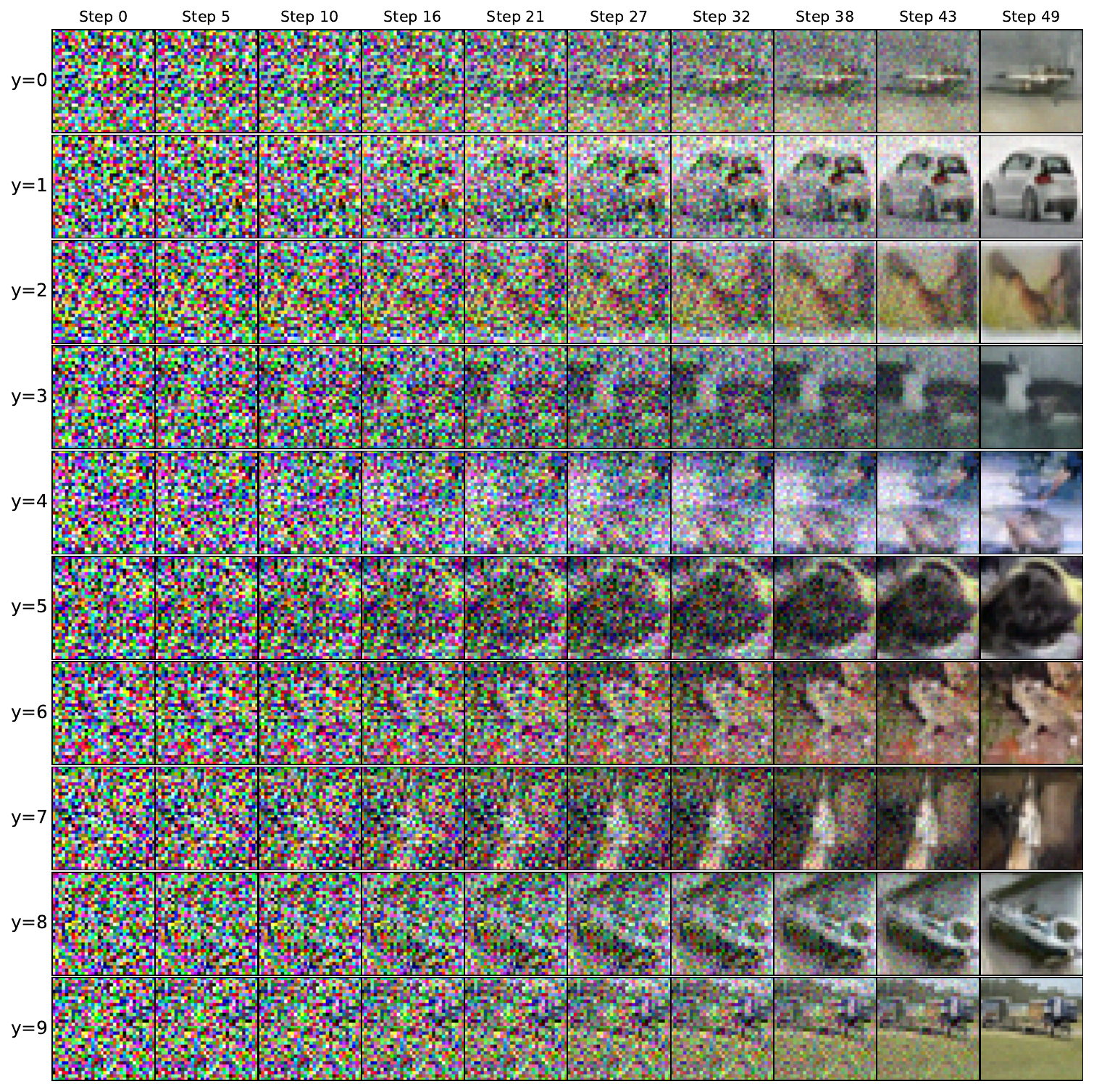}
    \small (a) Standard CFM with Gaussian prior ($w=1.0$)
  \end{minipage}
  \hfill
  \begin{minipage}{0.48\textwidth}
    \centering
    \includegraphics[width=\linewidth]{cifar10_trajectory_sequence_CFG1.pdf}
    \small (b) P-Guide with class-conditioned prior ($w=1.0$)
  \end{minipage}
    \caption{\textbf{Trajectory comparison under different initial latent distributions on CIFAR-10.} 
    Each row visualizes the evolution of a sample from its initial state ($t=0$) to the final generated image ($t=1$). 
    Standard CFM starts from a fixed Gaussian prior that is independent of class labels, resulting in less structured early-stage trajectories. 
    In contrast, P-Guide initializes from a class-conditioned latent distribution, where different categories induce distinct starting points. }
    \label{fig:cfm_vs_pguide_trajs}
\end{figure*}

\begin{figure*}[h]
  \centering
  \scalebox{0.94}{
  \begin{minipage}{0.48\textwidth}
    \centering
    \includegraphics[width=\linewidth]{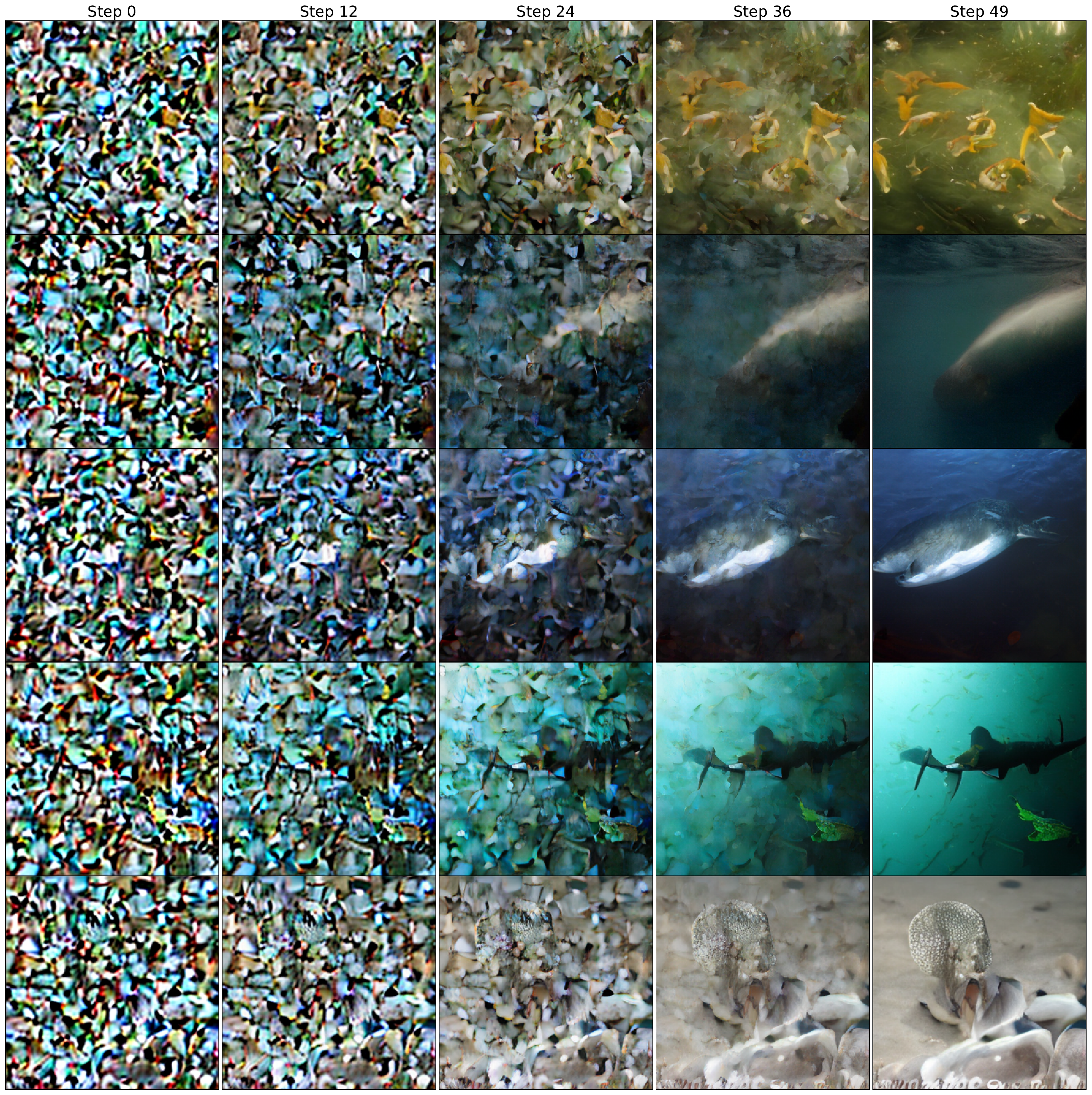}
    \small (a) Standard CFM with Gaussian prior ($w=1.0$)
  \end{minipage}
  \hfill
  \begin{minipage}{0.48\textwidth}
    \centering
    \includegraphics[width=\linewidth]{trajectory_image1000_sequence_CFG1.0.pdf}
    \small (b) P-Guide with class-conditioned prior ($w=1.0$)
  \end{minipage}}
    \caption{\textbf{Trajectory comparison under different initial latent distributions on ImageNet-1k.} 
    Each row shows the evolution of a sample from its initial state ($t=0$) to the final generated image ($t=1$). 
    As in CIFAR-10, standard CFM starts from a class-agnostic Gaussian prior, leading to less structured trajectories at early stages. 
    In contrast, P-Guide initializes from a class-conditioned latent distribution, where different categories correspond to distinct starting points. }
    \label{fig:imagenet_cfm_pguide_trajs}
\end{figure*}

\section{Anonymous Code Release and Reproducibility}
\label{sec:appendix_code}

To facilitate reproducibility and enable further research, we provide an anonymous code release of P-Guide at:

\begin{center}
\url{https://github.com/XinPeng76/P-Guide.git}
\end{center}

\paragraph{Code scope.}
The current anonymous release includes the core implementation of the proposed P-Guide method, in particular the prior-space steering module and the training pipeline used for ImageNet-1k experiments (see \texttt{train\_imagenet256\_DIT\_PGuide.py}). This implementation is sufficient to reproduce the training dynamics and validate the effectiveness of the proposed single-pass CFG inference paradigm.

\section{Statement on the Use of Large Language Models}
\label{app:llm_statement}

During the preparation of this manuscript, large language models (LLMs) were used in a limited manner
solely for language editing purposes, such as improving clarity, grammar, and academic style.
All aspects of the research conception, methodological development, experimental design,
analysis of results, and the scientific conclusions presented in this paper
were carried out independently by the authors.


\end{document}